\newcommand{\N}{\mathds{N}}
\newcommand{\R}{\mathds{R}}
\newtheorem{theorem}{Theorem}
\newtheorem{assumption}{Assumption}
\newtheorem{lemma}{Lemma}
\newtheorem{corollary}{Corollary}
\crefname{assumption}{Assumption}{Assumptions}
\theoremstyle{definition}
\newtheorem{definition}{Definition}[section]
\newcommand{\frsb}{\hat{\partial}}
\newcommand{\dist}{\mathop{\mathrm{dist}}}
\newcommand{\argmin}{\operatornamewithlimits{argmin}}
\newcommand{\prox}{\operatornamewithlimits{prox}}
\DeclarePairedDelimiter\abs{\lvert}{\rvert}
\crefname{algocf}{alg.}{algs.}
\Crefname{algocf}{Algorithm}{Algorithms}
\title{A Stochastic Proximal Method for Nonsmooth Regularized Finite Sum Optimization}
\author{\name Dounia Lakhmiri \email dounia.lakhmiri@polymtl.ca \\
       \addr Department of Mathematics and Industrial engineering\\
       Polytechnique Montreal\\
       Canada Excellence Research Chair in “Data Science\\
       for Real-time Decision-making"\\
       Montreal, Qc, Canada
       \AND
       \name Dominique Orban \email dominique.orban@polymtl.ca \\
       \addr Department of Mathematics and Industrial engineering\\
       Polytechnique Montreal\\
       Montreal, Qc, Canada
       \AND
       \name Andrea Lodi \email andra.lodi@cornell.edu\\
       \addr Canada Excellence Research Chair in “Data Science\\
       for Real-time Decision-making"\\
       Jacobs Technion-Cornell Institute\\
       Cornell University\\
       Ithaca, NY 14850, United States
       }
\begin{document}

\maketitle

\begin{abstract}
We consider the problem of training a deep neural network with nonsmooth regularization to retrieve a sparse and efficient sub-structure. Our regularizer is only assumed to be lower semi-continuous and prox-bounded. We combine an adaptive quadratic regularization approach with proximal stochastic gradient principles to derive a new solver, called SR2, whose convergence
and worst-case complexity are established without knowledge or approximation of the gradient's Lipschitz constant. We formulate a stopping criteria that ensures an appropriate first-order stationarity measure converges to zero under certain conditions. 
We establish a worst-case iteration complexity of $\mathcal{O}(\epsilon^{-2})$ that matches those of related methods like ProxGEN, where the learning rate is assumed to be related to the Lipschitz constant. 
Our experiments on network instances trained on CIFAR-10 and CIFAR-100 with $\ell_1$ and $\ell_0$ regularizations show that SR2 consistently achieves higher sparsity and accuracy than related methods such as ProxGEN and ProxSGD.
\end{abstract}

\begin{keywords} Pruning neural networks, regularization, proximal stochastic methods, nonsmooth nonconvex optimization, finite sum optimization.
  
\end{keywords}

\section{Introduction}
\label{submission}

We focus on the problem of training neural networks with regularization expressed as 
\begin{equation}%
    \label{eq:main_pb}
    \min_{x} F(x) := f(x) + \mathcal{R}(x),
    \qquad
    f(x) := \frac{1}{N}\sum_{i=1}^N f_i(x),
\end{equation}
where $x \in \R^n$ are the parameters, $f$ is the loss function, and $\mathcal{R}$ may be nonsmooth, nonconvex, and take infinite values. Instances of~\eqref{eq:main_pb} are often used as approximations of
\begin{equation*} 
    \min_{x} \mathbb{E}_{\omega \sim \mathcal{P}}[f(x, \omega)] + \mathcal{R}(x),
\end{equation*}
where $\omega$ follows a distribution $\mathcal{P}$.
In~\eqref{eq:main_pb}, \(\mathcal{R}\) helps select a solution with desirable features among all potential minimizers of \(f\). Examples include the weight decay technique, which uses $\mathcal{R}(x) := \|x\|_2$ to avoid over-fitting the training data \citep{krogh1992simple, zhou2021fixnorm}.
Other applications employ a specific regularizer, whether convex, such as $\|\cdot\|_1$, or nonconvex, such as $\|\cdot\|_0$, to retrieve a sparse sub-network for network pruning \citep{hoefler2021sparsity, wang2019structured, yang2019structured} or quantization \citep{bai2018proxquant, wess2018weighted}. For the rest of this work, we focus on sparsity-promoting $\mathcal{R}$.\\ 

We introduce SR2\footnote{\url{https://github.com/DouniaLakhmiri/SR2}}, a stochastic variant of the quadratic regularization method that solves~\eqref{eq:main_pb} for nonsmooth, nonconvex regularizers.
Our main contributions are
\begin{enumerate}
    \item to the best of our knowledge, the first stochastic adaptive quadratic regularization method for~\eqref{eq:main_pb} with weak assumptions on \(\mathcal{R}\); 

    \item the formulation of a stopping criterion and a first order stationarity measure adapted to nonsmooth, non-convex stochastic optimization problems;
    
    \item the convergence of a first-order stationarity measure to zero without assuming knowledge of the Lipschitz constant of $\nabla f$, and worst-case \(\mathcal{O}(\epsilon^{-2})\) iteration complexity;  
    \item numerical experiments on multiple instances of deep neural networks (DNNs) to retrieve a sparse sub-network. In most cases, SR2 achieves high sparsity levels without post-treatment. A comparison against two related proximal solvers, ProxSGD and ProxGEN, in terms of accuracy and sparsity of the solution is favorable for SR2.
\end{enumerate}

\subsection{Background and related work}

The stochastic gradient (SG) method \citep{kiefer1952stochastic, robbins1951stochastic}, and its variants \citep{ruder2016overview, adam, nguyen2017sarah}, are a common approach for~\eqref{eq:main_pb} when $\mathcal{R} = 0$. At iteration \(t\), SG selects a sample set \(\xi_t \subseteq \{1, \ldots, n\}\), computes the sampled gradient \(g_t = \frac{1}{|\xi_t|} \sum_{i \in \xi_t} \nabla f_i(x_t)\), and updates
\begin{equation}\label{eq:SG_update}
    x_{t+1} \leftarrow x_t -\alpha_t g_t,
\end{equation}
where $\alpha_t > 0$ is the step size, or learning rate. SG and variants typically accept every step regardless of whether the objective decreases or not.
For this reason, we do not refer to it as SGD, where D would stand for \emph{descent}. SG can be shown to converge in expectation under certain assumptions on the learning rate and on the quality of \(g_t\) \citep{Bottou2018}.\\

Proximal gradient descent (PGD) \citep{fukushima-mine-1981} is suited to the structure of~\eqref{eq:main_pb}, i.e., when $\mathcal{R} \not = 0$. At iteration \(t\), it computes a step
\begin{equation}%
    \label{eq:prox_subprob}
    s_t \in \argmin_s \tfrac{1}{2} \|s + \alpha_t g_t\|^2 + \alpha_t \mathcal{R}(x_t + s)
    = \argmin_s g_t^T s + \tfrac{1}{2} \alpha_t^{-1} \|s\|^2 + \mathcal{R}(x_t + s)
    := \prox_{\alpha_t \mathcal{R}(x_t + \cdot)}(- \alpha_t g_t)
\end{equation}
for a prescribed \(\alpha_t > 0\), followed by the update \(x_{t+1} := x_t + s_t\).\\

Observe that due to the nonsmoothness and/or nonconvexity of \(\mathcal{R}\), the right-hand side of~\eqref{eq:prox_subprob} may contain several elements.
The key point is that a closed form solution of~\eqref{eq:prox_subprob} is known for a wide range of choices of \(\mathcal{R}\) \citep{beck2017first, Rockafellar1998}. PGD has been substantially studied in the deterministic case and is provably convergent to first-order stationary points under weak assumptions \citep{karimi2016linear, teboulle1997convergence}.
In the case \(g_t = \nabla f(x_t)\), \(s_t\) is guaranteed to result in a decrease in \(F\) provided that \(\alpha_t \leq 1 / L\) \citep[Lemma~\(2\)]{palm}, $L$ being the Lipschitz constant of $\nabla f$.\\

Several variants have been successfully adapted to training deep networks and often provide proof of convergence towards critical solutions \citep{davis2019stochastic, pham2020proxsarah, Xu2019, yang2019proxSGD, yun2021adaptive}. They differ in the way they solve~\eqref{eq:prox_subprob},
in whether $\alpha_t$ is fixed or adaptive, in the use of a momentum term, a preconditioner, and other ML techniques that speed up convergence during training.

\subsection{Motivation and proposed approach} 
One notable and common assumption behind the convergence proof of the variants of SG and PGD is the initial learning rate $\alpha_0 \leq 1/L$. In practice, however, especially in deep learning, $L$ is unknown.\\

In the \emph{adaptive quadratic regularization} method, to which we will refer as R2, \(\alpha_t\) is adjusted based on the objective decrease observed at iteration \(t\).
R2 was initially proposed for the case with \(\mathcal{R} = 0\) and the term \emph{regularization} in its name should not be confused with the nonsmooth term \(\mathcal{R}\) in~\eqref{eq:main_pb}. About \(x_t\), a step \(s_t\) is computed that minimizes the linear model \(\varphi(s; x_t) := f(x_t) + \nabla f(x_t)^T s \approx f(x_t + s)\) to which we add the quadratic regularization term \(\tfrac{1}{2} \sigma_t \|s\|^2\), where \(\sigma_t > 0\) is a regularization parameter.
The larger \(\sigma_t\), the shorter we may expect \(s_t\) to be.
Conversely, small values of \(\sigma_t\) may allow us to compute large steps and make fast progress.
By completing the square, note that minimizing \(\varphi(s; x_t) + \tfrac{1}{2} \sigma_t \|s\|^2\) amounts to minimizing \(\tfrac{1}{2} \sigma_t \|s + \sigma_t^{-1} \nabla f(x_t)\|^2\), which corresponds to~\eqref{eq:prox_subprob} with \(\alpha_t := 1 / \sigma_t\) and may be viewed as gradient descent with adaptive step size.\\

\citet{lotfi-bonniot-orban-lodi-2020,lotfi2021adaptive} propose stochastic variants of R2 along with second-order methods for large scale machine learning when $\mathcal{R} = 0$. 
The fact that R2 appears closely relaxed to PGD motivated \citet{aravkin2021proximal} to generalize it to nonsmooth regularized problems with especially weak assumptions on \(\mathcal{R}\).
In the convergence analysis, the value of \(L\) is never explicitly needed.

\paragraph{Organization} The rest of the manuscrip is organized as follows. \Cref{sec:overview} gives a brief overview of ProxSGD and ProxGEN, two proximal methods related to SR2. \Cref{sec:sr2} develops SR2 and justifies the methodology. \Cref{sec:convergence} establishes the convergence guarantees towards a first-order stationary point w.p.1 and an iteration complexity analysis. In \Cref{sec:tests}, we present numerical results and experiments. We conclude with a discussion in \Cref{sec:conclusion}.

\paragraph{Notation}  $\|x\|$ is the Euclidean norm of $x \in \R^n$. $|\mathcal{S}|$ is the number of elements in the set $\mathcal{S}$.
We introduce a stochastic variable \(\xi: \N \to \mathcal{P}(\{1, \ldots, N\}) \setminus \varnothing\), whose domain represents an iteration counter, and which takes values in the set of nonempty samples of the sum in~\eqref{eq:main_pb}.
For a realization \(\xi_t := \xi(t)\) of \(\xi\) at iteration \(t\) we denote
\begin{align*}
    f(x, \xi_t) & := \frac{1}{|\xi_t|} \sum_{j \in \xi_t} f_j(x),
    \quad 
    g_t := \nabla f(x, \xi_t) \phantom{:}= \frac{1}{|\xi_t|} \sum_{j \in \xi_t} \nabla f_j(x)
\end{align*}
the sampled, or stochastic, objective and gradient. We also write $F(x, \xi_t) := f(x, \xi_t) + \mathcal{R}(x)$. We note $\mathbb{E}_\xi$ the expectation over the distribution of $\xi$, while $\mathbb{E}_{\hat{\xi}_t}$ represents the expectation over the distribution of $\xi$ that yields a success knowing $x_t$. The abbreviation w.p.1 means ``with probability one''.

\section{Overview of ProxSGD and ProxGEN}\label{sec:overview}

ProxSGD \citep{yang2019proxSGD} and ProxGEN \citep{yun2021adaptive} are two approaches based on the adaptation of the proximal gradient method, although neither is a descent method. Both consider a variant of~\eqref{eq:prox_subprob} with a momentum term $v_t$ instead of $g_t$, and a preconditioner.\\

ProxSGD assumes that \(\mathcal{R}\) is convex, and computes
\begin{subequations}\label{eq:proxSGD_update}
\begin{align}
    s_t & \in \argmin_s \ v_t^T s + \tfrac{1}{2} s^T D_t s + \mathcal{R}(x_t + s), \\
    x_{t+1} & = x_t + \alpha_t s_t,
\end{align}
\end{subequations}
where \(D_t\) is a positive-definite diagonal matrix. Note that ProxSGD does not exactly fit in the framework~\eqref{eq:prox_subprob}.
\citet{yang2019proxSGD} show convergence to a first-order stationary point w.p.1., but do not provide a complexity bound.\\

Although \citet{yun2021adaptive} do not explicitly mention their assumptions on \(\mathcal{R}\), they mention that ProxGEN does not require it to be convex.
ProxGEN may be seen as a proximal generalization of several SG variants like Adam, Adagrad, etc. that matches~\eqref{eq:prox_subprob} more closely than~\eqref{eq:proxSGD_update}.
It computes
\begin{subequations}\label{eq:proxGEN_update}
\begin{align}
    s_t & \in \argmin_s \ v_t^T s + \tfrac{1}{2} \alpha_t^{-1} s^T D_t s + \mathcal{R}(x_t + s), \\
    x_{t+1} & = x_t + s_t.
\end{align}
\end{subequations}
The authors show convergence to a first-order stationary point, and a worst-case complexity of $\mathcal{O}(\epsilon^{-2})$ in terms of iterations and $\mathcal{O}(\epsilon^{-4})$ overall to achieve $\mathbb{E}_a[\dist(0, \frsb F(x_a))] \leq \epsilon$ when the batch size is fixed, where $x_a$ is an iterate drawn uniformly randomly from $\{x_1, \ldots, x_T\}$, and $T$ is the maximum number of iterations.\\

The method we propose in the next section, SR2, has convergence results similar to ProxGEN but the version we present includes neither a momentum term nor a preconditioner, and it relies on an implicit assumption on the batch size---see \Cref{asp:step} below.

\section{Stochastic quadratic regularization: SR2}\label{sec:sr2}

Recall that \(\mathcal{R}: \R^n \to \R \cup \{\pm \infty\}\) is proper if it never takes the value \(-\infty\) and $\mathcal{R}(x) < \infty$ for at least one $x \in \R^n$, lower semi-continuous at \(\bar{x} \in \R^n\) if \(\liminf_{x \to \bar{x}} \mathcal{R}(x) \geq \mathcal{R}(\bar{x})\), and prox-bounded if there exists $x \in \R^n$ and $\lambda_x >0$ such that $\inf_w \{\frac{1}{2} \lambda_x^{-1} \|x - w\|^2 + \mathcal{R}(w)\} > -\infty$.
The supremum of all such $\lambda_x$ is the threshold of prox-boundedness of $\mathcal{R}$, which we also refer to as \(\lambda_x\). Any function that is bounded below is prox-bounded with \(\lambda_x = +\infty\), but certain unbounded regularizers, such as \(-\|x\|\) or \(-\|x\|^2\), are also prox-bounded. Our assumptions on~\eqref{eq:main_pb} are as follow.

\begin{assumption}\label{asp:basic}
There exists \(L > 0\) such that $f$ is $L$-smooth, i.e., $\|\nabla f(x) - \nabla f(y) \| \leq L \|x-y\|$ for all $x$, $y \in \R^n$.
In addition, \(\mathcal{R}\) is proper and lower semi-continuous at all \(x \in \R^n\), and $s \mapsto \mathcal{R}(x_t + s)$ is prox-bounded for each \(x_t\) encountered during the iterations.
\end{assumption}

Under the previous assumption, the appropriate concept of subdifferential is the following.

\begin{definition} 
    The Fréchet subdifferential \(\frsb \phi(\Bar{x})\) of $\phi: \R^n \to \R \cup \{\pm \infty\}$ at $\Bar{x}$ where \(\phi\) is finite is the set of \(v \in \R^n\) such that
    \[
        \liminf_{\substack{x\rightarrow \Bar{x} \\ x \neq \Bar{x}}} \, \frac{\phi(x) - \phi(\Bar{x}) - v^T (x - \Bar{x})}{\|x - \Bar{x}\|} \geq 0.
    \]
\end{definition}

\begin{assumption}\label{asp:non_empty_frsbdf}
   \(\mathcal{R}\) is such that $\frsb \mathcal{R} \neq \emptyset$, which implies $\frsb F = \nabla f + \frsb \mathcal{R} \neq \emptyset$,
\end{assumption}

Our assumptions on \(\mathcal{R}\) are satisfied for many sparsity-promoting regularizers of interest, including \(\|x\|_0\), \(\|x\|_p\), \(\|x\|_p^p\) for \(0 < p < 1\), and the indicator of \(\{x \mid \|x\|_0 \leq k\}\) for fixed \(k \in \{0, \ldots, n\}\). Note that~\Cref{asp:non_empty_frsbdf} excludes regularizers such as \(-\|x\|\) or \(-\|x\|^2\).\\

As in the deterministic version R2 \citep{aravkin2021proximal}, SR2 uses a linear model of \(f\) defined at each iteration $t$ as $\varphi(s; x_t) = f(x_t, \xi_t) + {g_t}^T s$,
such that $\varphi(0; x_t) = f(x_t, \xi_t)$ and $\nabla_s \varphi(0; x_t) = g_t$.
Let
\begin{equation}
    \label{eq:phi-plus-psi}
    \psi(s; x_t) := \varphi(s ;x_t) + \mathcal{R}(x_t + s).
\end{equation}
Note that the analysis of \citet{aravkin2021proximal} makes provision for using a model of \(\mathcal{R}\) about \(x_t\).
In the interest of clarity, we use the ideal \(\mathcal{R}(x_t + s)\) in the sequel, but our analysis below could just as easily accommodate a model.\\

For a regularization parameter $\sigma_t>0$, we also define
\begin{equation}
    m(s; x_t, \sigma_t) = \psi(s; x_t) + \tfrac{1}{2} \sigma_t \|s\|^2.
    \label{eq:model_sr2}
\end{equation}
SR2 starts the iteration with computing a step $s_t$ that minimizes~\eqref{eq:model_sr2}, which is equivalent to computing a proximal stochastic gradient step with step size ${\sigma_t}^{-1}$:
\begin{equation}
    \label{eq:prox_subprob_sr2}
    s_t \in \argmin_s m(s; x_t, \sigma_t) = \prox_{{\sigma_t}^{-1}\mathcal{R}}({\sigma_t}^{-1} g_t).
\end{equation}
Because the Lipschitz constant of $\nabla \varphi(.;x_t)$ is zero, \(s_t\) is guaranteed to result in a decrease in \(\psi(\cdot; x_t)\) \citep[Lemma~\(2\)]{palm}.
However, the latter does not necessarily correlate with a decrease in \(F\). Therefore, SR2 compares the ratio $\rho_t$ of the decrease in $F(.)$ to that in $\psi(\cdot; x_t)$ between $x_t$ and $x_t + s_t$ to decide on the acceptance of the step.
The value of $\rho_t$, which is indicative of the adequacy of the model \(\psi(\cdot; x_t)\) along \(s_t\), also guides the update of $\sigma_t$.
The procedure is stated in \Cref{alg:sr2}.

\RestyleAlgo{ruled}   
\begin{algorithm}
   \KwIn{ $0 < \eta_1 \leq \eta_2 < 1$, $ 0 < \gamma_3 \leq 1 < \gamma_1 \leq \gamma_2 $, $x_0 \in \R^n$ where $\mathcal{R}$ is finite, $\sigma_0 \geq \sigma_{\min} > 0$}
   \For{$t=1, \ldots $ }{
        Draw \(\xi_t\) and define \(g_t\) \;
        Define $m(s; x_t, \sigma_t)$ as in~\eqref{eq:model_sr2} \;
        Compute $s_t \in \argmin_s m(s;x_t, \sigma_t)$ \;
        \eIf{ $\xi_t$ does not satisfy Assumption 2}{ set $s_t=0$ \;}
        {
            Compute \(\Delta F_t := F(x_t) - F(x_t + s_t)\) \;
            Compute \(\Delta \psi_t := \psi(0;x_t) - \psi(s_t;x_t)\) \;
            Compute \(\rho_t := \Delta F_t / \Delta \psi_t\) \;
            \eIf{$\rho_t \geq \eta_1$}{
            set $x_{t+1} = x_t + s_t$ \Comment*{accept step}
            }{
            set $x_{t+1} = x_t$ \Comment*{reject step}
            }
        }

   Set $\sigma_{t+1} \in
        \left\{
        \begin{array}{lll}
            {[\max(\sigma_{\min}, \gamma_3 \sigma_t), \sigma_t]}          & \text{if } \rho_t \geq \eta_2          & \textcolor{gray}{(\sigma_t \searrow)} \\
            {[\sigma_t, \gamma_1 \sigma_t]}          & \text{if } \eta_1 \leq \rho_t < \eta_2 & \textcolor{gray}{(\sigma_t \approx)} \\ 
            {[\gamma_1 \sigma_t, \gamma_2 \sigma_t]} & \text{if } \rho_t < \eta_1             & \textcolor{gray}{(\sigma_t \nearrow)}
        \end{array}
        \right.$
   }
    \caption{SR2: Stochastic nonsmooth quadratic regularization.}\label{alg:sr2}
\end{algorithm}

The importance of prox-boundedness in \Cref{alg:sr2} resides in the update of $\sigma_t$.
If \(\sigma_t < 1 / \lambda_{x_t}\),~\eqref{eq:model_sr2} is unbounded below, so that \(\Delta m_t = +\infty\).
Because \(\mathcal{R}\) is proper, \(\Delta F_t\) is either finite or \(+\infty\).
Either way, the rules of extended arithmetic in nonsmooth optimization imply \(\rho_t = 0\), and therefore the step is rejected and \(\sigma_t\) is increased.
After a finite number of such increases, \(\sigma_t \geq 1 / \lambda_{x_t}\) and a step that yields finite \(\Delta m_t\) can be assessed.
A key result stated as \Cref{th:sigma} below is that as soon as \(\sigma_t\) is sufficiently large, the step will be accepted.

\Cref{sec:convergence} establishes the convergence properties of SR2, for which we require assumptions that ensure $g_t$ behaves somewhat similarly to \(\nabla f(x_t)\). Comparable conditions appear in \citep{Bottou2018, bollapragada2018adaptive}.

\begin{assumption}
    \label{asp:step}
    There exists $\kappa_m > 0$ such that for all $t$,
        \begin{align*}
            |f(x_t + s_t) - f(x_t)  - {g_t}^Ts_t| &\leq \kappa_m \|s_t\|^2.
        \end{align*}
    In addition
    \(\mathbb{E}_\xi[f(x_t, \xi)] = f(x_t)\), $\mathbb{E_\xi}[g_t] = \nabla f(x_t)$ 
\end{assumption}

\Cref{asp:step} states that the stochastic gradient should behave similarly to a full gradient, which implicitly involves a condition on the batch size.
If the assumption is not respected, the batch-size should be increased.
The process is finite because due to \Cref{asp:basic}, the inequality of \Cref{asp:step} holds with \(\kappa_m = \tfrac{1}{2} L\) when $g_t = \nabla f(x_t)$.
This is similar in spirit to implementing a variance reduction strategy, a standard condition for the convergence of stochastic gradient methods \citep{Bottou2018}.

\section{Convergence analysis}\label{sec:convergence}



Under \Cref{asp:basic}, $x^*$ is first-order stationary for~\eqref{eq:main_pb} if $0 \in \frsb F(x^*) = \nabla f(x^*) + \frsb \mathcal{R}(x^*)$ \citep[Theorem~\(10.1\)]{Rockafellar1998}.\\

The following result mirrors \citep[Theorem~\(6.2\)]{aravkin2021proximal} and shows that SR2 cannot generate a infinite number of failed iterations unless the step is zero. 
We require the following final assumption stating that \(s \mapsto \mathcal{R}(x_t + s)\) are \emph{uniformly} prox-bounded.
The assumption is trivially satisfied for any \(\mathcal{R}\) that is bounded below.

\begin{assumption}\label{asp:unif-prox-bounded}
  There exists \(\lambda > 0\) such that \(\lambda_{x_t} \geq \lambda\) for all \(x_t\) encountered during the iterations.
\end{assumption}

\begin{theorem}\label{th:sigma}
   Let \Cref{asp:basic,asp:step,asp:unif-prox-bounded} hold. If $s_t \neq 0$ and $\sigma_t \geq \sigma_{\textup{succ}} := \max(2\kappa_m / (1-\eta_2), 1 / \lambda)$, then $s_t$ is accepted and $\sigma_t \leq \sigma_t$.
\end{theorem}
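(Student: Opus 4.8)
The plan is to prove the slightly stronger statement $\rho_t \geq \eta_2$, from which both conclusions follow at once: since $\eta_1 \leq \eta_2$ the step is accepted, and since $\rho_t \geq \eta_2$ triggers the ``very successful'' branch of the update rule, $\sigma_{t+1} \in [\max(\sigma_{\min}, \gamma_3\sigma_t), \sigma_t]$, so $\sigma_{t+1} \leq \sigma_t$. The whole argument reduces to bounding $1 - \rho_t$ from above by a multiple of $1/\sigma_t$.

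First I would check that $\rho_t$ is a ratio of finite quantities with strictly positive denominator, so that the estimate to come is not vacuous. The hypothesis $\sigma_t \geq 1/\lambda$ together with \Cref{asp:unif-prox-bounded} gives $\sigma_t \geq 1/\lambda \geq 1/\lambda_{x_t}$, which guarantees that $m(\cdot\,; x_t, \sigma_t)$ in~\eqref{eq:model_sr2} is bounded below; hence its minimum value $m(s_t; x_t, \sigma_t)$, and therefore $\psi(s_t; x_t)$, are finite. Combined with the fact that $\mathcal{R}(x_t)$ remains finite along the iterates (by induction: $x_0$ lies where $\mathcal{R}$ is finite, accepted steps produce finite model values and hence finite $\mathcal{R}(x_t+s_t)$, and rejected steps leave $x_t$ unchanged), both $\Delta\psi_t$ and $\Delta F_t$ are finite.

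Next I would compute the discrepancy between the two decrements. Expanding the definitions of $\psi$ and $F$, the terms $f(x_t,\xi_t)$, $\mathcal{R}(x_t)$ and $\mathcal{R}(x_t+s_t)$ cancel, leaving
\[
  \Delta\psi_t - \Delta F_t = f(x_t + s_t) - f(x_t) - g_t^T s_t,
\]
so that $|\Delta\psi_t - \Delta F_t| \leq \kappa_m \|s_t\|^2$ directly by \Cref{asp:step}. Separately, because $s_t$ minimizes $m(\cdot\,; x_t, \sigma_t)$ we have $m(s_t; x_t, \sigma_t) \leq m(0; x_t, \sigma_t) = \psi(0; x_t)$, which rearranges to the model-decrease inequality
\[
  \Delta\psi_t \geq \tfrac{1}{2}\sigma_t \|s_t\|^2 > 0,
\]
the strict positivity coming from $s_t \neq 0$. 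This is the role played by the quadratic regularization term, and it confirms that the denominator of $\rho_t$ is strictly positive.

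Combining the two bounds yields
\[
  1 - \rho_t = \frac{\Delta\psi_t - \Delta F_t}{\Delta\psi_t}
             \leq \frac{\kappa_m \|s_t\|^2}{\tfrac{1}{2}\sigma_t \|s_t\|^2}
             = \frac{2\kappa_m}{\sigma_t},
\]
and finally invoking $\sigma_t \geq 2\kappa_m / (1-\eta_2)$ makes the right-hand side at most $1 - \eta_2$, hence $\rho_t \geq \eta_2$, which completes the argument. I expect the only genuinely delicate point to be the finiteness and strict-positivity bookkeeping of the second paragraph: the estimate on $1-\rho_t$ is meaningless if the prox subproblem is unbounded below or if $\Delta\psi_t$ can vanish, and the hypotheses $\sigma_t \geq 1/\lambda$ and $s_t \neq 0$ are precisely what rule these out. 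The algebraic cancellation and the final substitution are routine.
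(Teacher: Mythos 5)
Your proposal is correct and follows essentially the same route as the paper's proof: the optimality of $s_t$ gives $\Delta\psi_t \geq \tfrac{1}{2}\sigma_t\|s_t\|^2$, \Cref{asp:step} bounds the numerator $\Delta\psi_t - \Delta F_t$ by $\kappa_m\|s_t\|^2$, and the two combine to give $1-\rho_t \leq 2\kappa_m/\sigma_t \leq 1-\eta_2$. Your additional bookkeeping on finiteness and strict positivity of $\Delta\psi_t$ is a welcome elaboration of what the paper only sketches in its opening sentence.
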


\begin{proof}
    As explained above, we assume that \(\sigma_t \geq 1 / \lambda \geq 1 / \lambda_{x_t}\) to ensure that \(\Delta m_t\) is finite.
    By definition of \(s_t\), $m_t(s_t, x_t, \sigma_t) \leq m_t(0, x_t, \sigma_t)$, i.e., 
    \begin{equation}\label{eq:optim_s2}
        {g_t}^Ts_t + \mathcal{R}(x_t + s_t) + \tfrac{1}{2}\sigma_t\|s_t\|^2 \leq \mathcal{R}(x_t).
    \end{equation}
    The definition of $\rho_t$, \Cref{asp:step} and~\eqref{eq:optim_s2} yield
    \begin{equation*}
      |\rho_t - 1| = \abs*{\frac{f(x_t) + {g_t}^Ts_t - f(x_t+s_t)}{\mathcal{R}(x_t) - {g_t}^Ts_t - \mathcal{R}(x_t + s_t)}}
      \leq \frac{2 \kappa_m \|s_t\|^2}{\sigma_t \|s_t\|^2}= \frac{2 \kappa_m}{\sigma_t} \leq \frac{2 \kappa_m}{\sigma_{\textup{succ}}} = 1 - \eta_2.
    \end{equation*}
    Thus, $\rho_t \geq \eta_2$ and $\sigma_t \leq \sigma_t$.
\end{proof}


As a consequence of~\Cref{th:sigma}, there is a constant $\sigma_{\max} := \min \{ \sigma_0, \gamma_2 \sigma_{\textup{succ}}\} >0$ such that for all $t, \sigma_t \leq \sigma_{\max}$.\\

Next, we analyze the scenario where SR2 only generates a finite number of successes, and show that the method converges to a first order stationary point w.p.1 in this case.

\begin{theorem}\label{th:2}
   Let \Cref{asp:basic,asp:step,asp:unif-prox-bounded} hold.
   If \Cref{alg:sr2} only generates a finite number of successes, $x_t=x_{t^*}$ for all sufficiently large $t$ and $x_{t^*}$ is first-order stationary w.p.1.
\end{theorem}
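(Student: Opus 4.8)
The plan is to split the statement into its two assertions and dispatch the easy one first. Let $t^*$ be the index of the last successful iteration (set $t^* = 0$ if there is none). For every $t > t^*$ the iteration fails, so the update rule sets $x_{t+1} = x_t$; hence $x_t = x_{t^*}$ for all $t \geq t^*$, which is the first claim. The substance of the theorem is the stationarity of $x_{t^*}$, and for that I would reason along the failing tail $t > t^*$.

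First I would show that the regularization parameter blows up. Each $t > t^*$ is a rejection, which corresponds to $\rho_t < \eta_1$ (degenerate ratios produced by $s_t = 0$ being treated as failures with $\rho_t = 0 < \eta_1$), so the update forces $\sigma_{t+1} \geq \gamma_1 \sigma_t$ with $\gamma_1 > 1$ and therefore $\sigma_t \to \infty$. In particular $\sigma_t \geq \sigma_{\textup{succ}}$ for all sufficiently large $t$. I would then invoke \Cref{th:sigma} in contrapositive form: if at such an iteration $s_t \neq 0$, the step would necessarily be accepted, contradicting that $t > t^*$ is a failure. Hence $s_t = 0$ for all sufficiently large $t$.

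Next I would extract a stationarity condition from $s_t = 0$. Since $0$ minimizes $m(\cdot\,; x_{t^*}, \sigma_t)$ and the smooth part $s \mapsto {g_t}^T s + \tfrac{1}{2}\sigma_t\|s\|^2$ is differentiable with gradient $g_t$ at $s = 0$, the Fréchet optimality condition together with the exact sum rule for a smooth-plus-nonsmooth function gives $0 \in g_t + \frsb\mathcal{R}(x_{t^*})$, that is, $-g_t \in \frsb\mathcal{R}(x_{t^*})$, for every large $t$.

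Finally I would pass from the sampled gradient to the true gradient. Using $\mathbb{E}_\xi[g_t] = \nabla f(x_{t^*})$ from \Cref{asp:step}, the target is $-\nabla f(x_{t^*}) \in \frsb\mathcal{R}(x_{t^*})$, which is exactly $0 \in \nabla f(x_{t^*}) + \frsb\mathcal{R}(x_{t^*}) = \frsb F(x_{t^*})$. Since the sample space of $\xi$ is finite, every realization of positive probability occurs infinitely often along the tail w.p.1, so $-g \in \frsb\mathcal{R}(x_{t^*})$ for each attainable stochastic gradient $g$, w.p.1. I expect this last step to be the main obstacle: $\frsb\mathcal{R}$ is a closed but generally \emph{nonconvex} set, so one cannot simply carry the expectation inside the inclusion, and one must instead argue either that $\nabla f(x_{t^*})$ is itself an attainable realization of $g_t$, or work with the conditional distribution $\hat{\xi}_t$, or exploit the concrete structure of $\frsb\mathcal{R}(x_{t^*})$ for the regularizers of interest. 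The ``w.p.1'' qualifier is precisely what absorbs the randomness of both the terminal index $t^*$ and the sampling along the tail.
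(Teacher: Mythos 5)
Your argument tracks the paper's proof exactly up to the point where you obtain $-g_t \in \frsb\mathcal{R}(x_{t^*})$ for all sufficiently large $t$: last success index, $\sigma_t \to \infty$ along the failing tail, contrapositive of \Cref{th:sigma} forcing $s_t = 0$, and the first-order condition for $0$ to minimize $m(\cdot\,;x_{t^*},\sigma_t)$. The genuine gap is the final passage from the sampled inclusions to $-\nabla f(x_{t^*}) \in \frsb\mathcal{R}(x_{t^*})$, which you explicitly leave open, and your diagnosis of why it is hard rests on a false premise: you assert that $\frsb\mathcal{R}(x_{t^*})$ is ``generally nonconvex,'' but the Fréchet (regular) subdifferential of any function at a given point is always a \emph{closed convex} set --- it is the limiting subdifferential that can fail to be convex. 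This is precisely the fact the paper invokes (\citealp[Proposition~8.6]{Rockafellar1998}), and it is the key that unlocks the step you are missing: by convexity, every empirical average $-\bar{g}_m = -\tfrac{1}{m}\sum_{i} g_i$ of the tail gradients also lies in $\frsb\mathcal{R}(x_{t^*})$; by the strong law of large numbers and the unbiasedness in \Cref{asp:step}, $\bar{g}_m \to \nabla f(x_{t^*})$ w.p.1; and by closedness the limit $-\nabla f(x_{t^*})$ belongs to the set, giving $0 \in \frsb F(x_{t^*})$.

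The alternatives you float do not repair the argument in the generality claimed. Arguing that $\nabla f(x_{t^*})$ is itself an attainable realization of $g_t$ requires the full batch to have positive probability under the distribution of $\xi$, which is not assumed; and ``every realization of positive probability occurs infinitely often'' presumes a finite sample space and an i.i.d.\ sampling scheme that the paper never imposes. The convexity-plus-closedness route avoids all of these extra hypotheses and is the intended (and correct) mechanism, so the missing ingredient is a single standard fact rather than a structural flaw --- but as written the proof does not close.
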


\begin{proof}
  If \Cref{alg:sr2} results in a finite number of successful iterations, there exists $ t_1$ so that for all $t \geq t_1$, iteration $t$ fails. Consequently, $\rho_t < \eta_1$ and $\sigma_{t+1} \geq \gamma_1 \sigma_t $.\\

  Necessarily, there exists a $ t_2 \geq t_1$ such that $\sigma_t \geq \sigma_{\textup{succ}}$ for all \(t \geq t_2\).\\
  
  If there existed \(t \geq t_2\) such that $s_t \neq 0$, Theorem~\ref{th:sigma} would ensure that iteration $t$ is successful, which contradicts our assumption.
  Therefore, $s_t = 0$ and \(0 \in \argmin_s m_{t_2}(s;x_{t_2}, \sigma_{t_2})\).\\
  
  Since $\mathcal{R}$ is prox-bounded, $\frsb \mathcal{R}$ closed and convex \citep[Propositions~$8.6$ and~$8.46$]{Rockafellar1998}, and therefore, \(-g_t \in \frsb \mathcal{R}(x_{t_2})\), for all \(t \geq t_2\).

We now show that $-\nabla f(x_{t_2}) \in \frsb \mathcal{R}(x_{t_2})$.
The empirical mean of the next $m$ stochastic gradients satisfies
\begin{equation*}
    - \Bar{g}_m = -\frac{1}{m} \sum_{i=t_2}^{t_2+m} g_i \in \frsb \mathcal{R}(x_{t_2}),
\end{equation*}
because \(\frsb \mathcal{R}(x_{t_2})\) is convex.\\

According to the law of large numbers and \Cref{asp:step},
\begin{equation*}
    \lim_{m \to \infty} \Bar{g}_m = \mathbb{E}[g] = \nabla f(x_{t_2}) \quad \text{w.p.1}.
\end{equation*}
Because $\frsb \mathcal{R}(x_{t_2})$ is closed, \(- \nabla f(x_{t_2}) \in \frsb \mathcal{R}(x_{t_2})\), i.e.,
\begin{equation*}
    0 \in \nabla f(x_{t_2}) + \frsb \mathcal{R}(x_{t_2}) = \frsb F(x_{t_2}),
\end{equation*}
and $x_{t_2}$ is a first order stationary point w.p.1.
\end{proof}

We now focus on the case where SR2 generates infinitely many successes. By analogy with the deterministic and smooth case where \(s_t = -\sigma_t^{-1} \nabla f(x_t)\), our criticality measure is $\mathbb{E}_{\hat{\xi}_t} [\|s^\xi\|^2] \leq \epsilon^2$, where $\mathbb{E}_{\hat{\xi}_t}$ denotes the expectation taken over the distribution of the $\xi$ that yields a success knowing $x_t$. The first iteration that satisfies the latter condition is noted $t(\epsilon)$. 

We start by studying the complexity of reaching this termination criteria. To that effect, let us define
\begin{align}
    \mathcal{S} &= \{t\in \N  \mid  \rho_t \geq \eta_1 \}, \\
    \mathcal{S(\epsilon)} &= \{t\in \mathcal{S} \mid t \leq t(\epsilon) \}, \\
    \mathcal{U(\epsilon)} &= \{t\in \N \mid  t < t(\epsilon) \text{ and } \rho_t < \eta_1\}.
\end{align}

\begin{lemma}\label{lemma:s_epsilon}
    Let \Cref{asp:basic,asp:step,asp:unif-prox-bounded} hold.
    If \Cref{alg:sr2} generates an infinite number of successes and if there exists \(F_{\textup{low}} \in \R\) such that $F(x_t) \geq F_{\textup{low}}$ for all $t \geq 0$, then for any $\epsilon \in (0,1)$,
    $|\mathcal{S(\epsilon)}| = \mathcal{O}(\epsilon^{-2})$.
\end{lemma}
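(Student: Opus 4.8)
The plan is the standard ``sum the per-iteration decreases against a finite budget'' argument, adapted to the stochastic criticality measure $\mathbb{E}_{\hat{\xi}_t}[\|s^\xi\|^2]$. First I would extract a decrease guarantee on successful iterations, then telescope over $\mathcal{S(\epsilon)}$ using the lower bound $F \ge F_{\textup{low}}$, and finally convert the realized step norms into the stopping quantity.

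For the per-iteration decrease, since $s_t$ minimizes $m(\cdot\,; x_t, \sigma_t)$, evaluating the model at $s = 0$ gives $\psi(s_t; x_t) + \tfrac{1}{2}\sigma_t\|s_t\|^2 \le \psi(0; x_t)$, i.e. $\Delta\psi_t \ge \tfrac{1}{2}\sigma_t\|s_t\|^2 \ge \tfrac{1}{2}\sigma_{\min}\|s_t\|^2$, using $\sigma_t \ge \sigma_{\min}$. On a successful iteration $\rho_t \ge \eta_1$ with $\Delta\psi_t > 0$, so $\Delta F_t = \rho_t\,\Delta\psi_t \ge \tfrac{1}{2}\eta_1\sigma_{\min}\|s_t\|^2$; in particular $F$ strictly decreases on each success and is unchanged on failures. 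Because $\{F(x_t)\}$ is then non-increasing and only successful iterations move the iterate, I would telescope to obtain the finite budget
\[
  F(x_0) - F_{\textup{low}} \ge F(x_0) - F(x_{t(\epsilon)}) = \sum_{t \in \mathcal{S(\epsilon)}} \Delta F_t \ge \tfrac{1}{2}\eta_1\sigma_{\min}\sum_{t \in \mathcal{S(\epsilon)}} \|s_t\|^2 .
\]

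The last step uses the stopping rule: by definition of $t(\epsilon)$, every $t < t(\epsilon)$ has $\mathbb{E}_{\hat{\xi}_t}[\|s^\xi\|^2] > \epsilon^2$. Taking the expectation over the successes, conditioned on the history up to $t$ (which determines $x_t$, hence whether $t < t(\epsilon)$), the realized $\|s_t\|^2$ on a success is replaced by $\mathbb{E}_{\hat{\xi}_t}[\|s^\xi\|^2] > \epsilon^2$, and summing yields
\[
  \mathbb{E}[|\mathcal{S(\epsilon)}|] < \frac{2\,(F(x_0) - F_{\textup{low}})}{\eta_1\,\sigma_{\min}\,\epsilon^2} = \mathcal{O}(\epsilon^{-2}).
\]

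The hardest part is the probabilistic bookkeeping in this final step: the decrease that telescopes is driven by the \emph{realized} $\|s_t\|^2$, whereas the stopping test is phrased through the \emph{conditional mean} $\mathbb{E}_{\hat{\xi}_t}[\|s^\xi\|^2]$. Reconciling the two requires the tower property together with the per-step success probability $p_t$: the factor $p_t$ picked up when converting realized successful steps to conditional means cancels exactly against $\sum_{t<t(\epsilon)} p_t = \mathbb{E}[|\mathcal{S(\epsilon)}|]$, which is why the natural rigorous conclusion is a bound on $\mathbb{E}[|\mathcal{S(\epsilon)}|]$ (a direct substitution $\|s_t\|^2 \mapsto \mathbb{E}_{\hat{\xi}_t}[\|s^\xi\|^2]$ reproduces the deterministic-looking $\mathcal{O}(\epsilon^{-2})$ bound on the cardinality itself). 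I would also need to verify that $t(\epsilon)$ is a legitimate stopping time, which holds because the criticality measure depends only on $x_t$, making $\{t < t(\epsilon)\}$ measurable with respect to the history up to $t$.
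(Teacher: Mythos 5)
Your proposal is correct and follows essentially the same skeleton as the paper's proof: the model-decrease bound $\Delta\psi_t \geq \tfrac{1}{2}\sigma_{\min}\|s_t\|^2$ from evaluating $m(\cdot\,;x_t,\sigma_t)$ at $s=0$, the success condition $\rho_t \geq \eta_1$ turning this into a sufficient decrease of $F$, telescoping against $F_{\textup{low}}$, and invoking $\mathbb{E}_{\hat{\xi}_t}[\|s^\xi\|^2] \geq \epsilon^2$ for $t<t(\epsilon)$. The one substantive difference lies in the final probabilistic step: the paper inserts the conditional expectation $\mathbb{E}_{\hat{\xi}_t}$ directly into the per-iteration inequality (replacing the realized $\|s_t\|^2$ by its conditional mean over successful draws) and then telescopes a ``total expectation'' defined as the composition $\mathbb{E}_{\hat{\xi}_1}\cdots\mathbb{E}_{\hat{\xi}_{t-1}}$, thereby asserting the deterministic bound $|\mathcal{S}(\epsilon)| = \mathcal{O}(\epsilon^{-2})$; you instead telescope the realized decreases first and reconcile them with the conditional means via the tower property, which honestly yields a bound on $\mathbb{E}[|\mathcal{S}(\epsilon)|]$ rather than on the cardinality itself. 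Your bookkeeping is the more careful of the two --- the paper's substitution of conditional means for realized quantities, and its treatment of the random set $\mathcal{S}(\epsilon)$ as if each of its elements contributed a sure decrease of $\tfrac{1}{2}\eta_1\sigma_{\min}\epsilon^2$, is exactly the shortcut you identify as producing the ``deterministic-looking'' bound --- so your remark that the rigorous conclusion is in expectation is a fair critique of the statement as written rather than a gap in your own argument.
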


\begin{proof}

When $t \in \mathcal{S(\epsilon)}$, $\rho_t \geq \eta_1$.

  Using~\eqref{eq:optim_s2}, the facts that $t < t(\epsilon)$ and \(\sigma_{\min} \leq \sigma_t \leq \sigma_{\max}\), we have
 
\begin{equation*}
    F(x_t) - F(x_t + s_t) \geq \eta_1 \Delta \psi_t
    \geq \tfrac{1}{2} \eta_1 \sigma_{\min} ||s_t||^2.
\end{equation*}

This inequality holds for every $s^{\xi}$ derived from $\xi$ that yields a success at iteration $t$. We can therefore introduce the expectation over the distribution of the $\xi$ that yield a success given $x_t$, denoted $\mathbb{E}_{\hat{\xi}_t}$. Therefore $\mathbb{E}_{\hat{\xi}_t}[F(x_t + s^{\xi})]$ is a relevant quantity, and the previous inequality becomes
\begin{equation}
    F(x_t) - \mathbb{E}_{\hat{\xi}_t}[F(x_t + s^{\xi})] \geq \eta_1 \Delta \psi_t
    \geq \tfrac{1}{2} \eta_1 \sigma_{\min} \mathbb{E}_{\hat{\xi}_t}[||s_t||^2].
    \label{eq:interm_eq}
\end{equation}

Because $t<t(\epsilon)$, $\mathbb{E}_{\hat{\xi}_t}[||s_t||^2] \geq \epsilon^2$.
Thus, since $t\in \mathcal{S}$,~\eqref{eq:interm_eq} becomes
\begin{equation}\label{eq:expected_decrease_t}
    F(x_t) - \mathbb{E}_{\hat{\xi}_t}[F(x_t + s^{\xi})] =
    F(x_t) - \mathbb{E}_{\hat{\xi}_t}[F(x_{t+1})] \geq
    \tfrac{1}{2} \eta_1 \sigma_{\min} \epsilon^2.
\end{equation}

By analogy with \citet{Bottou2018}, we introduce the total expectation $\mathbb{E}[.]$ with respect to the joint distribution of all previous realization of $\xi$ that yield a success, thus $\mathbb{E}[F(x_t)] := \mathbb{E}_{\hat{\xi}_1} \mathbb{E}_{\hat{\xi}_2} \ldots \mathbb{E}_{\hat{\xi}_{t-1}}[F(x_t)]$.
Taking the total expectation in~\eqref{eq:expected_decrease_t} yields
\begin{equation}\label{eq:expected_decrease_total}
    \mathbb{E}[F(x_t)] - \mathbb{E}[F(x_{t+1})] \geq  \tfrac{1}{2} \eta_1 \sigma_{\min} \epsilon^2.
\end{equation}

Because $x_{t+1} = x_t$ if \(\xi_t\) yields \(t \in \mathcal{U}\), while $x_{t+1} = x_t + s_t$ if \(\xi_t\) yields \(t \in \mathcal{S}\),
    
\begin{align*}
    F(x_1) - F(x_{\textup{low}})&\geq  \mathbb{E}[F(x_1)] - \mathbb{E}[F(x_{t(\epsilon)})] \\
    &= \sum_{t=1}^{t(\epsilon)} \Big[ \mathbb{E}[F(x_t)] - \mathbb{E}[F(x_{t+1})] \Big]\\
        &\geq \sum_{t \in \mathcal{S}(\epsilon)} \Big[ \mathbb{E}[F(x_t)] - \mathbb{E}[F(x_{t+1})] \Big] \\
        &\geq \tfrac{1}{2} \eta_1 \sigma_{\min} \epsilon^2 |\mathcal{S}(\epsilon)|.
  \end{align*}

Therefore, $|\mathcal{S}(\epsilon)| = \mathcal{O}(\epsilon^{-2})$.
\end{proof}

\begin{lemma}\label{lemma:u_epsilon}
    Under the assumptions of \Cref{lemma:s_epsilon}, $|\mathcal{U(\epsilon)}| = \mathcal{O}(\epsilon^{-2})$.
\end{lemma}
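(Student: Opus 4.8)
The goal is to bound $|\mathcal{U}(\epsilon)|$, the number of failed iterations before $t(\epsilon)$. The standard technique in adaptive regularization complexity analysis is to exploit the multiplicative update structure of $\sigma_t$: on a success $\sigma_t$ may decrease (but by no more than a factor $\gamma_3$), while on a failure $\sigma_t$ is multiplied by at least $\gamma_1 > 1$. Since $\sigma_t$ is trapped in $[\sigma_{\min}, \sigma_{\max}]$ by Theorem~\ref{th:sigma} and its corollary, the total multiplicative ``growth budget'' is finite, and this caps how many failures can occur relative to successes.

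\begin{proof}
The plan is to track the evolution of $\log \sigma_t$ over the iterations up to $t(\epsilon)$. On each successful iteration $t \in \mathcal{S}(\epsilon)$, the update rule guarantees $\sigma_{t+1} \geq \max(\sigma_{\min}, \gamma_3 \sigma_t) \geq \gamma_3 \sigma_t$, so $\sigma_t$ decreases by at most the factor $\gamma_3 \leq 1$. On each unsuccessful iteration $t \in \mathcal{U}(\epsilon)$, we have $\rho_t < \eta_1$, hence $\sigma_{t+1} \geq \gamma_1 \sigma_t$ with $\gamma_1 > 1$. Telescoping the product of these multiplicative factors from iteration $1$ up to $t(\epsilon)$ gives
\begin{equation*}
    \sigma_{\min} \leq \sigma_{t(\epsilon)} \leq \sigma_0\, \gamma_1^{|\mathcal{U}(\epsilon)|}\, \gamma_3^{|\mathcal{S}(\epsilon)|},
\end{equation*}
where the upper bound collects the worst-case growth on failures and the worst-case shrinkage on successes. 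Since every iteration before $t(\epsilon)$ is either a success or a failure, these two index sets partition $\{1, \ldots, t(\epsilon)-1\}$.
\end{proof}

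Taking logarithms of the displayed inequality isolates $|\mathcal{U}(\epsilon)|$ in terms of $|\mathcal{S}(\epsilon)|$ and the (constant) endpoints $\sigma_0$ and $\sigma_{\min}$: one obtains
$|\mathcal{U}(\epsilon)| \,\log \gamma_1 \leq \log(\sigma_0/\sigma_{\min}) + |\mathcal{S}(\epsilon)|\,\log(1/\gamma_3)$,
so that $|\mathcal{U}(\epsilon)| \leq (\log\gamma_1)^{-1}\big(\log(\sigma_0/\sigma_{\min}) + |\mathcal{S}(\epsilon)| \log(1/\gamma_3)\big)$. Because the first term is a constant independent of $\epsilon$ and, by \Cref{lemma:s_epsilon}, $|\mathcal{S}(\epsilon)| = \mathcal{O}(\epsilon^{-2})$, the whole right-hand side is $\mathcal{O}(\epsilon^{-2})$, which is the claimed bound. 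I would present the argument as a short telescoping-plus-logarithm computation, invoking \Cref{lemma:s_epsilon} at the very end.

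I expect the only subtle point to be bookkeeping at the boundary: one must use the corollary to \Cref{th:sigma} (that $\sigma_t \leq \sigma_{\max}$ for all $t$) implicitly to know the iteration count is well-defined, and be careful that the ``set $s_t = 0$'' branch of \Cref{alg:sr2} — where $\xi_t$ violates \Cref{asp:step} — does not break the failure-growth accounting. Since such an iteration has $s_t = 0$ and is not counted as a success, it falls under the failure bookkeeping and still increases $\sigma_t$, so the telescoping argument remains valid; this is the place where I would be most careful to make sure the dichotomy used to partition the iterations is airtight.
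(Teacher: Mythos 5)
Your overall strategy is exactly the paper's: count failures by telescoping the multiplicative updates of $\sigma_t$ against the uniform cap coming from \Cref{th:sigma}, take logarithms, and invoke \Cref{lemma:s_epsilon} for $|\mathcal{S}(\epsilon)|$. However, as written your key display has the inequality backwards, and the conclusion you draw from it does not follow. From $\sigma_{t+1} \geq \gamma_1 \sigma_t$ on failures and $\sigma_{t+1} \geq \gamma_3 \sigma_t$ on successes, telescoping gives a \emph{lower} bound, $\sigma_{t(\epsilon)} \geq \sigma_0\,\gamma_1^{|\mathcal{U}(\epsilon)|}\gamma_3^{|\mathcal{S}(\epsilon)|}$, not the upper bound you wrote (an upper bound on $\sigma_{t(\epsilon)}$ by telescoping would involve $\gamma_2$ on failures and $\gamma_1$ on the middling successes, which is useless here). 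The binding constraint is then $\sigma_{t(\epsilon)} \leq \sigma_{\max}$ from the corollary of \Cref{th:sigma} --- not $\sigma_{t(\epsilon)} \geq \sigma_{\min}$, which is the side you invoke. Combining $\sigma_{\min} \leq \sigma_0\,\gamma_1^{|\mathcal{U}(\epsilon)|}\gamma_3^{|\mathcal{S}(\epsilon)|}$, as your display does, yields only a \emph{lower} bound on $|\mathcal{U}(\epsilon)|\log\gamma_1$; the inequality $|\mathcal{U}(\epsilon)|\log\gamma_1 \leq \log(\sigma_0/\sigma_{\min}) + |\mathcal{S}(\epsilon)|\log(1/\gamma_3)$ that you state afterwards is not a consequence of what you displayed.

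The fix is one line and restores the paper's argument: write $\sigma_{\max} \geq \sigma_{t(\epsilon)} \geq \sigma_0\,\gamma_1^{|\mathcal{U}(\epsilon)|}\gamma_3^{|\mathcal{S}(\epsilon)|}$, then take logarithms to get $|\mathcal{U}(\epsilon)|\log\gamma_1 \leq \log(\sigma_{\max}/\sigma_0) - |\mathcal{S}(\epsilon)|\log\gamma_3$, and conclude with $\log\gamma_3 \leq 0$ and \Cref{lemma:s_epsilon}. (The constant in front is $\log(\sigma_{\max}/\sigma_0)$, not $\log(\sigma_0/\sigma_{\min})$, though either is an $\epsilon$-independent constant so the asymptotic conclusion is unaffected.) Your closing remark about the branch where $\xi_t$ violates the sampling assumption is a reasonable piece of bookkeeping and does not affect the argument.
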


\begin{proof}
Let $t \in \mathcal{U(\epsilon)}$, so that $t < t(\epsilon)$. \Cref{alg:sr2} increases $\sigma_t$ by a factor of at least $\gamma_1 > 1$ if the step is rejected, and decreases $\sigma_t$ by a factor of at most $\gamma_3 \in (0, 1]$ if it is accepted.
 
Thus, at iteration $t(\epsilon) - 1$, we have successively
\begin{alignat*}{2}
  & & \sigma_{\max} & \geq \sigma_{t(\epsilon) - 1} \geq \sigma_0 \gamma_1^{|\mathcal{U(\epsilon)}|} \gamma_3^{|\mathcal{S(\epsilon)}|}\\
  & \Rightarrow & \dfrac{\sigma_{\max}}{\sigma_0} & \geq \gamma_1^{|\mathcal{U(\epsilon)}|} \gamma_3^{|\mathcal{S(\epsilon)}|}\\
  & \Leftrightarrow & \log(\dfrac{\sigma_{\max}}{\sigma_0}) & \geq |\mathcal{U(\epsilon)}| \log(\gamma_1) + |\mathcal{S(\epsilon)}| \log(\gamma_3)\\
  & \Leftrightarrow \quad & |\mathcal{U(\epsilon)}| \log(\gamma_1) & \leq \log(\dfrac{\sigma_{\max}}{\sigma_0}) - |\mathcal{S(\epsilon)}| \log(\gamma_3).
 \end{alignat*}
 Because $\log(\gamma_3) < 0$ and $|\mathcal{S(\epsilon)}| = \mathcal{O}(\epsilon^{-2})$, we obtain $|\mathcal{U(\epsilon)}| = \mathcal{O}(\epsilon^{-2}).$
\end{proof}

From $t(\epsilon) = |\mathcal{S(\epsilon)}| + |\mathcal{U(\epsilon)}|$, we deduce $t(\epsilon) = \mathcal{O}(\epsilon^{-2})$,
and obtain the two following results.

\begin{theorem}\label{th:3}
   Under the assumptions of~\Cref{lemma:s_epsilon}, either $F$ is unbounded from below or $\liminf_{t \rightarrow \infty} \mathbb{E}_{\hat{\xi}_t}[||s^\xi||^2] = 0$
\end{theorem}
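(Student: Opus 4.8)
The plan is to prove the dichotomy by contradiction after disposing of the trivial alternative. If $F$ is unbounded from below, the first disjunct holds and there is nothing to prove, so I would assume there exists $F_{\textup{low}} \in \R$ with $F(x_t) \geq F_{\textup{low}}$ for all $t$; this is exactly the standing hypothesis of \Cref{lemma:s_epsilon}. I would then suppose, toward a contradiction, that $\liminf_{t\to\infty} \mathbb{E}_{\hat{\xi}_t}[\|s^\xi\|^2] = L > 0$ and derive an unbounded expected decrease in $F$.

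The first step is to turn the positive $\liminf$ into a uniform lower bound along the tail: there exist $\epsilon \in (0,1)$ with $\epsilon^2 < L$ and an index $T_0$ such that $\mathbb{E}_{\hat{\xi}_t}[\|s^\xi\|^2] \geq \epsilon^2$ for every $t \geq T_0$. This places us, for all large $t$, in precisely the regime $t < t(\epsilon)$ that drove the bound in \Cref{lemma:s_epsilon}, except that here it never terminates. The second step is to recycle the per-success expected-decrease inequality~\eqref{eq:interm_eq} from that proof: on a successful iteration $t$ one has $F(x_t) - \mathbb{E}_{\hat{\xi}_t}[F(x_t + s^\xi)] \geq \tfrac12 \eta_1 \sigma_{\min} \mathbb{E}_{\hat{\xi}_t}[\|s^\xi\|^2]$, which for $t \geq T_0$ is at least $\tfrac12 \eta_1 \sigma_{\min} \epsilon^2 > 0$. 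Passing to the nested total expectation as in the paragraph preceding~\eqref{eq:expected_decrease_total}, the same lower bound survives.

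The third step is the telescoping summation in total expectation, again mirroring \Cref{lemma:s_epsilon}. Since failed iterations leave the iterate unchanged and hence contribute a zero term, summing the decrease over the successful iterations in $[T_0, T]$ yields
\begin{equation*}
  \mathbb{E}[F(x_{T_0})] - F_{\textup{low}} \geq \mathbb{E}[F(x_{T_0})] - \mathbb{E}[F(x_{T+1})] \geq \tfrac12 \eta_1 \sigma_{\min} \epsilon^2 \, \bigl|\{t \in \mathcal{S} : T_0 \leq t \leq T\}\bigr|.
\end{equation*}
Because \Cref{alg:sr2} generates infinitely many successes, the cardinality on the right diverges as $T \to \infty$, while the left-hand side is a fixed finite number. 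This contradiction forces $L = 0$, which is the stated conclusion.

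The hard part will not be the analysis but the expectation bookkeeping: I must keep the conditioning in $\mathbb{E}_{\hat{\xi}_t}$ and the nested total expectation $\mathbb{E}[\cdot]$ consistent with \Cref{lemma:s_epsilon}, so that the telescoping is legitimate and the failed iterations genuinely drop out. A secondary point to verify is that the tail bound $\mathbb{E}_{\hat{\xi}_t}[\|s^\xi\|^2] \geq \epsilon^2$ is invoked only along the infinitely many successful iterations beyond $T_0$, ensuring the per-step decrease is strictly positive and never vacuous; this is where the assumption of infinitely many successes is indispensable.
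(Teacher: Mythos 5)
Your proof is correct and rests on exactly the same mechanism the paper uses: the per-success expected decrease $F(x_t) - \mathbb{E}_{\hat{\xi}_t}[F(x_t+s^\xi)] \geq \tfrac12\eta_1\sigma_{\min}\mathbb{E}_{\hat{\xi}_t}[\|s^\xi\|^2]$ from the proof of \Cref{lemma:s_epsilon}, telescoped in total expectation against the lower bound $F_{\textup{low}}$. The only difference is packaging: the paper deduces \Cref{th:3} from the finiteness of $t(\epsilon) = |\mathcal{S}(\epsilon)| + |\mathcal{U}(\epsilon)|$ for every $\epsilon$, whereas you argue by contradiction directly and thereby bypass \Cref{lemma:u_epsilon} entirely (the count of unsuccessful iterations is needed only for the complexity bound, not for the $\liminf$ statement), which is a legitimate and slightly leaner route to the same conclusion.
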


\begin{theorem}\label{th:4}
    Let \(0 < \epsilon < 1\). Then,  
    $\mathbb{E}_{\hat{\xi}_{t(\epsilon)}}[\dist(0, \frsb F(x_{t(\epsilon)}+s^{\xi}))^2] \leq C \epsilon^2 + 3 \mathbb{E}_{\hat{\xi}_{t(\epsilon)}}[||\nabla f(x_{t(\epsilon)}) - g^\xi||^2]$, where $C=3(L^2+\sigma_{\max}^2)$.
\end{theorem}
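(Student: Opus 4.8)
The plan is to use the first-order optimality (Fermat's rule) for the proximal subproblem to exhibit an \emph{explicit} element of $\frsb F(x_{t(\epsilon)} + s^\xi)$, and then to bound its norm by three elementary pieces. Write $t = t(\epsilon)$ throughout. Since $s^\xi$ minimizes $m(\cdot\,; x_t, \sigma_t)$ in~\eqref{eq:model_sr2} and the smooth part $s \mapsto f(x_t,\xi_t) + {g^\xi}^T s + \tfrac{1}{2}\sigma_t\|s\|^2$ is continuously differentiable with gradient $g^\xi + \sigma_t s$, I would combine Fermat's rule $0 \in \frsb m(s^\xi;x_t,\sigma_t)$ with the sum rule for the Fréchet subdifferential of a smooth function plus a lower semi-continuous one \citep[Exercise~8.8]{Rockafellar1998} and the translation invariance $\frsb[\mathcal{R}(x_t+\cdot)](s^\xi) = \frsb\mathcal{R}(x_t+s^\xi)$. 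This yields
\[ -(g^\xi + \sigma_t s^\xi) \in \frsb\mathcal{R}(x_t + s^\xi), \]
where \Cref{asp:non_empty_frsbdf} guarantees that the subdifferential on the right is nonempty.

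Next I would recombine this inclusion with the smooth gradient. Because $\frsb F = \nabla f + \frsb\mathcal{R}$, adding $\nabla f(x_t + s^\xi)$ shows that the vector
\[ v^\xi := \nabla f(x_t + s^\xi) - g^\xi - \sigma_t s^\xi \in \frsb F(x_t + s^\xi), \]
so $\dist(0, \frsb F(x_t + s^\xi)) \leq \|v^\xi\|$ and the estimate reduces to bounding $\|v^\xi\|$. Inserting $\nabla f(x_t)$, I write $v^\xi = [\nabla f(x_t+s^\xi) - \nabla f(x_t)] + [\nabla f(x_t) - g^\xi] - \sigma_t s^\xi$: the first bracket has norm at most $L\|s^\xi\|$ by the $L$-smoothness in \Cref{asp:basic}, and the last term has norm $\sigma_t\|s^\xi\| \leq \sigma_{\max}\|s^\xi\|$ by the uniform bound $\sigma_t \leq \sigma_{\max}$ following from \Cref{th:sigma}. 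Applying $\|a+b+c\|^2 \leq 3(\|a\|^2+\|b\|^2+\|c\|^2)$ gives
\[ \dist(0,\frsb F(x_t+s^\xi))^2 \leq 3(L^2+\sigma_{\max}^2)\|s^\xi\|^2 + 3\|\nabla f(x_t) - g^\xi\|^2. \]

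Finally I would take $\mathbb{E}_{\hat\xi_{t(\epsilon)}}$ of both sides. Since $t = t(\epsilon)$ is by definition the first index at which the stopping criterion $\mathbb{E}_{\hat\xi_{t}}[\|s^\xi\|^2] \leq \epsilon^2$ holds, the first term is bounded by $3(L^2+\sigma_{\max}^2)\epsilon^2 = C\epsilon^2$, giving exactly the claim with $C = 3(L^2+\sigma_{\max}^2)$. I expect the only delicate point to be the first step, namely the rigorous passage from ``$s^\xi$ minimizes $m$'' to the subdifferential inclusion in the genuinely nonconvex, extended-valued setting: this needs the \emph{exact} (not merely inclusion-wise) sum rule, which is available precisely because the added terms are smooth, and it relies on \Cref{asp:non_empty_frsbdf} so that $\frsb\mathcal{R}(x_t+s^\xi)$ is nonempty. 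Everything after that is routine triangle-inequality estimation and linearity of expectation.
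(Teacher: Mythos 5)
Your proposal is correct and follows essentially the same route as the paper's proof: the optimality condition for the proximal subproblem yields $\nabla f(x_t+s^\xi) - g^\xi - \sigma_t s^\xi \in \frsb F(x_t+s^\xi)$, the same three-term decomposition with $\|a+b+c\|^2 \leq 3(\|a\|^2+\|b\|^2+\|c\|^2)$ gives the pointwise bound, and the expectation over successful realizations together with the definition of $t(\epsilon)$ finishes the argument. Your extra care in justifying the subdifferential inclusion via the exact sum rule for a smooth plus lower semi-continuous function is a welcome refinement of a step the paper states without comment.
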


\begin{proof}
From the definition of $s_t$, we have
\begin{align*}
    0 &\in g_t + \sigma_t s_t + \frsb \mathcal{R}(x_t+s_t)\\
    \Leftrightarrow -(g_t+\sigma_t s_t) &\in \frsb \mathcal{R}(x_t+s_t)\\
    \Leftrightarrow \nabla f(x_t+s_t) -(g_t+\sigma_t s_t) &\in \frsb F(x_t+s_t).
\end{align*}

Thus
\begin{align*}
    \dist(0, \frsb F(x_t+s_t))^2 &\leq \| \nabla f(x_t+s_t) - g_t - \sigma_t s_t \|^2\\
            &\leq 3\|\nabla f(x_t+s_t) - \nabla f(x_t) \|^2 + 3\|\sigma_t s_t\|^2 + 3\| \nabla f(x_t)- g_t \|^2 \\
            & \leq 3 L^2\|s_t\|^2 + 3\|\sigma_t s_t\|^2 + 3\| \nabla f(x_t)- g_t \|^2 \\
            & \leq 3 (L^2 + \sigma_{\max}^2 ) \|s_t\|^2 + 3\| \nabla f(x_t)- g_t \|^2,
\end{align*}
which is true for every step $s^{\xi}$ computed with a realization of $\xi$, i.e.,
\begin{equation*}
    \dist(0, \frsb F(x_t+s^{\xi}))^2 \leq 3 (L^2 + \sigma_{\max}^2 ) \|s^{\xi}\|^2 + 3\| \nabla f(x_t)- g^{\xi} \|^2.
\end{equation*}
Therefore 
\begin{equation*}
    \mathbb{E}_{\xi_{t(\epsilon)}}[\dist(0, \frsb F(x_t+s^{\xi}))^2] \leq 3 (L^2 + \sigma_{\max}^2 ) \mathbb{E}_{\xi_{t(\epsilon)}}[\|s^{\xi}\|^2] + 3\mathbb{E}_{\xi_{t(\epsilon)}} [ \| \nabla f(x_t)- g^{\xi} \|^2].
\end{equation*}

For $t=t(\epsilon)$, we have $\mathbb{E}_{\xi_{t(\epsilon)}} [\|s^\xi\|^2]\leq \epsilon^2$. Thus
\begin{equation*}
    \mathbb{E}_{\xi_{t(\epsilon)}}[\dist(0, \frsb F(x_t+s^{\xi}))^2] \leq 3 (L^2 + \sigma_{\max}^2 ) \epsilon^2 + 3\mathbb{E}_{\xi_{t(\epsilon)}} [\| \nabla f(x_t)- g^{\xi} \|^2]. \qedhere
\end{equation*}

\end{proof}

Finally, we are set to analyze the properties of $x_{t(\epsilon)}$ in terms of stationarity. 
\citet{shamir2020can} and \citet{Zhang2020OnCO} discuss the impossibility of finding $\epsilon$-stationary points for nonsmooth and nonconvex functions with first-order methods in finite time.
Instead, \citet{Zhang2020OnCO} introduce a relaxation of the concept of $\epsilon$-stationarity, namely, $(\delta, \epsilon)$-stationarity which is reported in \Cref{def:d_e_stationarity}.

\begin{definition}\label{def:d_e_stationarity}
    A point $x$ is called $(\delta, \epsilon)$-stationary if 
    \(
        d(0, \partial F(x+\delta B)) \leq \epsilon,
    \)
    where $\partial F(x+\delta B) := conv \big(\cup_{y \in x + \delta B} \partial F(y) \big)$, and $\partial F$ is the generalized gradient of $F$~\citep{clarke1990optimization}.
\end{definition}

We propose a variant of \Cref{def:d_e_stationarity} that is better adapted to our method. Note that other adaptations of the $(\delta, \epsilon)$-stationarity notion are discussed in \citet{shamir2020can}. 

\begin{definition}\label{def:our_d_e_stationarity}
    A point $x$ is called $\widehat{(\delta, \epsilon)}$-stationary if 
    \(
        \mathbb{E}_{\hat{\xi}} \Big [ d(0, \frsb F(x+s^\xi))^2\Big ] \leq \epsilon^2,  \text{ with } \delta = \max_{\hat{\xi}} ||s^\xi||.
    \)
\end{definition}

\Cref{def:our_d_e_stationarity} appears in the result of \Cref{th:4}, if a variance reduction strategy is additionally implemented to ensure the second right term becomes lower than $\epsilon^2$. This remark is expressed in \Cref{cor:1}.

\begin{corollary}\label{cor:1}
    Let \(0 < \epsilon < 1\). If a variance reduction strategy ensures $\mathbb{E}_{\hat{\xi}_{t(\epsilon)}}[||\nabla f(x_{t(\epsilon)}) - g^\xi||^2] \leq \epsilon^2$, then $x_{t(\epsilon)}$ is a $\widehat{(\delta, (C+3)\epsilon)}$ stationary point, with $C = 3(L^2+\sigma_{\max}^2)$ and $\delta = \max_{\hat{\xi}} ||s^\xi||$.
\end{corollary}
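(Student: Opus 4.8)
The plan is to read the result directly off \Cref{th:4} by substituting the variance-reduction hypothesis and then matching the resulting bound against \Cref{def:our_d_e_stationarity}. \Cref{th:4} already supplies, at the terminating iteration $t = t(\epsilon)$,
\[
\mathbb{E}_{\hat{\xi}_{t(\epsilon)}}\big[\dist(0, \frsb F(x_{t(\epsilon)}+s^\xi))^2\big] \leq C\epsilon^2 + 3\,\mathbb{E}_{\hat{\xi}_{t(\epsilon)}}\big[\|\nabla f(x_{t(\epsilon)}) - g^\xi\|^2\big],
\]
so the first step is simply to insert the assumed variance bound $\mathbb{E}_{\hat{\xi}_{t(\epsilon)}}[\|\nabla f(x_{t(\epsilon)}) - g^\xi\|^2] \leq \epsilon^2$ into the second term. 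This immediately yields $\mathbb{E}_{\hat{\xi}_{t(\epsilon)}}[\dist(0, \frsb F(x_{t(\epsilon)}+s^\xi))^2] \leq (C+3)\epsilon^2$.

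The second step is to reconcile this squared bound with the form required by \Cref{def:our_d_e_stationarity}, which asks for $\mathbb{E}_{\hat{\xi}}[\dist(0,\frsb F(x+s^\xi))^2] \leq \tilde{\epsilon}^2$ in order to declare $x$ a $\widehat{(\delta,\tilde{\epsilon})}$-stationary point. Here I would take $\tilde{\epsilon} = (C+3)\epsilon$ and exploit that $C = 3(L^2+\sigma_{\max}^2) > 0$ forces $C+3 > 1$, so that $(C+3)\epsilon^2 \leq (C+3)^2\epsilon^2 = ((C+3)\epsilon)^2$. Chaining this with the bound from the first step gives exactly $\mathbb{E}_{\hat{\xi}_{t(\epsilon)}}[\dist(0,\frsb F(x_{t(\epsilon)}+s^\xi))^2] \leq ((C+3)\epsilon)^2$, which is the defining inequality for $\widehat{(\delta,(C+3)\epsilon)}$-stationarity. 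The radius $\delta = \max_{\hat{\xi}}\|s^\xi\|$ is inherited verbatim from the definition and needs no separate argument.

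There is no genuine obstacle here: the corollary is an immediate specialization of \Cref{th:4} under the added variance-control hypothesis. The only point deserving care is the passage from the $(C+3)\epsilon^2$ bound on the expected \emph{squared} distance to the accuracy parameter $(C+3)\epsilon$ appearing in the stationarity notion; this is precisely where $C+3 \geq 1$ is invoked, and it renders the stated constant slightly loose (a sharper statement would read $\widehat{(\delta,\sqrt{C+3}\,\epsilon)}$). I would flag this explicitly rather than silently absorbing the square root, so that the reader sees why the weaker, cleaner constant $(C+3)\epsilon$ is still a valid accuracy certificate.
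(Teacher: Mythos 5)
Your proof is correct and follows exactly the route the paper intends: the corollary is stated as an immediate consequence of \Cref{th:4} (the paper gives no separate proof), obtained by inserting the variance bound and matching the result against \Cref{def:our_d_e_stationarity}. Your explicit handling of the passage from $(C+3)\epsilon^2$ to $((C+3)\epsilon)^2$ via $C+3\geq 1$, and your observation that the sharper constant $\sqrt{C+3}\,\epsilon$ would suffice, fill in a step the paper silently absorbs.
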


\section{Experiments}\label{sec:tests}

We compare SR2 against ProxSGD and ProxGEN to train three DNNs on the CIFAR-10 and CIFAR-100 datasets. The networks considered are DenseNet-121, ResNet-34 and DenseNet-201, with $7.98$M, $21.79$M and $20$M parameters respectively. Each set of tests uses $\mathcal{R} = \lambda \|\cdot\|_1, \lambda \in \{10^{-4}, 10^{-3}, 10^{-2}\}$, while $\mathcal{R} = \lambda \|\cdot\|_0$ is not tested with ProxSGD as it is not designed for nonconvex regularization.\\

We use the proximal SGD variant of ProxGEN. The implementation of ProxGEN was provided to us by its authors, and we also use their implementation of ProxSGD. Both methods use the hyperparameters mentioned in their respective papers and implementations. The implementation of SR2 is available at \url{https://github.com/DouniaLakhmiri/SR2} and its configurationis reported in \Cref{tab:hps}. In our implementation, we compute $\rho_t$ based on the sampled value of $F(., \xi_t)$ instead of the full objective.\\ 

For the sake of a fair comparison, we have disabled the momentum directions and preconditioners from ProxSGD and ProxGEN as well as the scheduled updates of the learning rate at epochs $150$ and $250$. These common accelerating strategies are not yet incorporated to SR2 and would give ProxSGD and ProxGEN an unfair advantage as shown in~\Cref{fig:acc_jump}. \\

Each test trains for $300$ epochs after which we proceed to pruning each solution based on the criterion $|w_i| \leq \alpha $ with $\alpha = 10 ^{-k}$, \(k = 1, \ldots, 8\), where $w_i$ is the $i$-th weight in the network. We then compare the sparsity level and retained accuracy of the sparse networks without re-training.

\begin{minipage}[ht]{\textwidth}
  \begin{minipage}[b]{0.49\textwidth}
        \centering
        \includegraphics[scale=0.5]{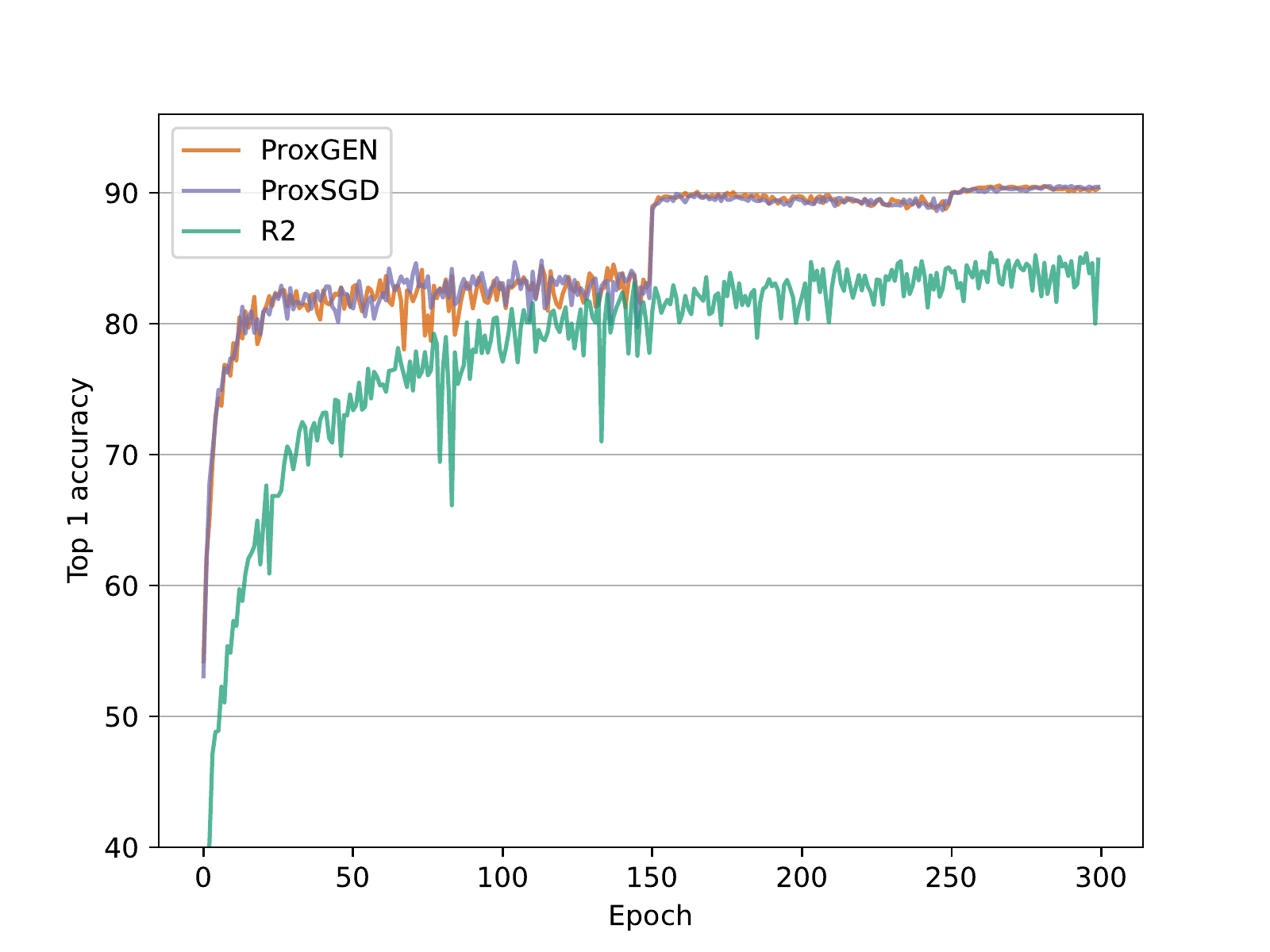}
        \captionof{figure}{Accuracy of ProxGEN and ProxSGD with momentum, preconditioner, and learning rate updates at epochs $150$ and $250$.}
    \label{fig:acc_jump}
  \end{minipage}
  \hfill
  \begin{minipage}[b]{0.49\textwidth}
    \centering
    \begin{tabular}{lrrrr}
        \toprule
        $\eta_1$ & $\eta_2$ & $\gamma_1$ & $\gamma_2$ & $\gamma_3$ \\
        \midrule
        $7.5 \cdot 10^{-4}$ & $0.99$ & $5.56$ & $2.95$ & $0.8$\\
        \bottomrule
    \end{tabular}
    \captionof{table}{SR2 hyperparameters.}
    \label{tab:hps}
  \end{minipage}
\end{minipage}


\subsection{Results on CIFAR-10}

\Cref{tab:res_c10} reports the results with $\mathcal{R} = \lambda \|\cdot\|_1$. For both networks, we observe that SR2 combined with $\lambda = 10^{-4}$ achieves the highest accuracy overall, while ProxSGD gets the highest accuracies with $\lambda = 10^{-3}, 10^{-2}$. \Cref{tab:res_c10} also reports information on the magnitudes of the weights in each solution. Interestingly, SR2 has a consistent tendency to set a large portion of the network's weights to exactly 0 while ProxSGD does the opposite and ProxGEN falls in between in this regard. This observation highlights the clear difference between ProxSGD and ProxGEN in the solutions each method finds. In addition, SR2 identifies a larger proportion of small weights than ProxSGD and ProxGEN.\\

\Cref{fig:d121} (top) reports accuracy and sparsity results on pruned DenseNet-121  with $\mathcal{R} = \lambda\|\cdot\|_1$. The top plot shows that most configurations retain full accuracy until $\alpha = 10^{-3}$ or $10^{-2}$, except for the one trained with ProxSGD, which shows a small drop at $\alpha = 10^{-3}$. The accuracy of all networks drops to $10\%$ for $\alpha = 10^{-1}$.
The plot at the top right shows the sparsity ratio with each pruning criteria. Overall, the combination of SR2 with $\lambda=10^{-4}$ and $\alpha = 10^{-2}$ has the highest accuracy with a high sparsity level of $97.5\%$, followed by ProxGEN with $\lambda=10^{-3}$ and $\alpha = 10^{-3}$ and a sparsity of $94.4\%$.\\

\begin{table}[!htb]
    \caption{Results of training DenseNet-121 and ResNet-34 on CIFAR-10}
    \begin{subtable}[t]{.5\linewidth}
      \caption{with $\mathcal{R} = \lambda\|\cdot\|_1$.}
     \label{tab:res_c10}
    \resizebox{\columnwidth}{!}{%
    \begin{tabular}{lcccrrr} \toprule
    \textbf{Net.} & \textbf{$\lambda$} & \textbf{Optim.} & \textbf{Acc.} & $\% |w| = 0$ & $\% |w| \leq 10^{-3}$  \\
        \midrule
         & & ProxSGD & $72.20 \%$ & $0.00\%$ & $20.22\%$\\
         & $10^{-4}$ & ProxGEN & $ 72.26\%$  & $6.50\%$  & $22.71\%$\\
                & & SR2  & $\mathbf{84.69 \%}$& $\mathbf{79.47}\%$ & $\mathbf{92.15}\%$\\
        \cline{2-6}
         & & ProxSGD & $\mathbf{77.43 \%}$ & $0.00\%$ & $82.18\%$\\
        D-121 & $10^{-3}$ & ProxGEN & $ 76.81\%$  & $44.21\%$  & $94.40\%$\\
                & & SR2  & $68.26 \%$& $\mathbf{95.43\%}$ & $\mathbf{98.17}\%$\\
        \cline{2-6}        
         & & ProxSGD & $\mathbf{78.36 \%}$ & $0.00\%$ & $94.16\%$\\
         & $10^{-2}$ & ProxGEN & $ 59.69\%$  & $\mathbf{98.03\%}$  & $\mathbf{99.13\%}$\\
                & & SR2  & $76.49 \%$& $78.11\%$ & $98.59\%$\\
        \bottomrule
         & & ProxSGD & $85.12 \%$ & $0.00\%$ & $64.70\%$\\
         & $10^{-4}$ & ProxGEN & $ 85.98\%$  & $1.02\%$  & $72.94\%$\\
                & & SR2  & $\mathbf{93.94 \%}$& $\mathbf{56.18\%}$ & $\mathbf{98.12\%}$\\
        \cline{2-6}
         & & ProxSGD & $\mathbf{89.67\%}$ & $0.00\%$ & $94.85\%$\\
        R-34 & $10^{-3}$ & ProxGEN & $83.27\%$  & $\mathbf{73.34\%}$ &$99.25\%$ \\
               &    & SR2  & $88.46\%$ & $67.97\%$ & $\mathbf{99.50\%}$\\
        \cline{2-6}
         & & ProxSGD & $\mathbf{88.12 \%}$ & $0.00\%$ & $98.28\%$\\
         & $10^{-2}$ & ProxGEN & $ 35.56\%$  & $\mathbf{99.34\%}$  & $\mathbf{99.92\%}$\\
                & & SR2  & $29.33 \%$& $62.17\%$ & $91.84\%$ \\
        \bottomrule
    \end{tabular}
    }
    \end{subtable}%
    \hfill
    \begin{subtable}[t]{.5\linewidth}
      \caption{with $\mathcal{R} = \lambda \|\cdot\|_0$.}
     \label{tab:res_c10_l0}
    \resizebox{\columnwidth}{!}{%
     \begin{tabular}{lcccrrr} \toprule
    \textbf{Net.} & \textbf{$\lambda$} & \textbf{Optim.} & \textbf{Acc.} & $\% |w| = 0$ & $\% |w| \leq 10^{-3}$  \\
        \midrule
            & $10^{-4}$ & ProxGEN & $71.09 \%$  &$2.39\%$ &  $3.27\%$\\
            &                   &  SR2    & $\mathbf{80.29 \%}$ & $\mathbf{14.67\%}$ & $\mathbf{14.74 \%}$\\ 
        \cline{2-6}
         & $10^{-3}$ & ProxGEN & $70.44 \%$  &$4.13\%$ &  $4.13\%$\\
        D-121   &   &  SR2  & $  \mathbf{79.11 \%}$ & $\mathbf{23.82\%}$ & $ \mathbf{25.85\%}$\\ 
        \cline{2-6}
         & $10^{-2}$ & ProxGEN & $71.03 \%$  &$9.63\%$ &  $10.05\%$\\
           &   &  SR2  & $\mathbf{79.79\%}$ & $\mathbf{94.99\%}$ & $ \mathbf{95.06\%}$\\ 
        \bottomrule
        & $10^{-4}$ & ProxGEN & $86.87 \%$  &$5.48\%$ &  $7.13\%$\\
        &                   &  SR2    & $\mathbf{90.59 \%}$ & $\mathbf{93.43\%}$ & $\mathbf{93.43 \%}$\\ 
        \cline{2-6}
         & $10^{-3}$ & ProxGEN & $85.81\%$ & $11.04\%$ & $11.07\%$  \\
        R-34    &    &    SR2  & $\mathbf{92.20\%}$ &$\mathbf{94.41\%}$ & $\mathbf{94.42\%}$\\
        \cline{2-6}
        & $10^{-2}$ & ProxGEN & $86.40 \%$  &$28.85\%$ &  $28.86\%$\\
        &                   &  SR2    & $\mathbf{87.82 \%}$ & $\mathbf{99.04\%}$ & $\mathbf{99.07 \%}$\\ 
        \bottomrule
    \end{tabular}
    }
    \end{subtable} 
\end{table}

\begin{figure}[h]
    \centering
    \includegraphics[scale=0.3]{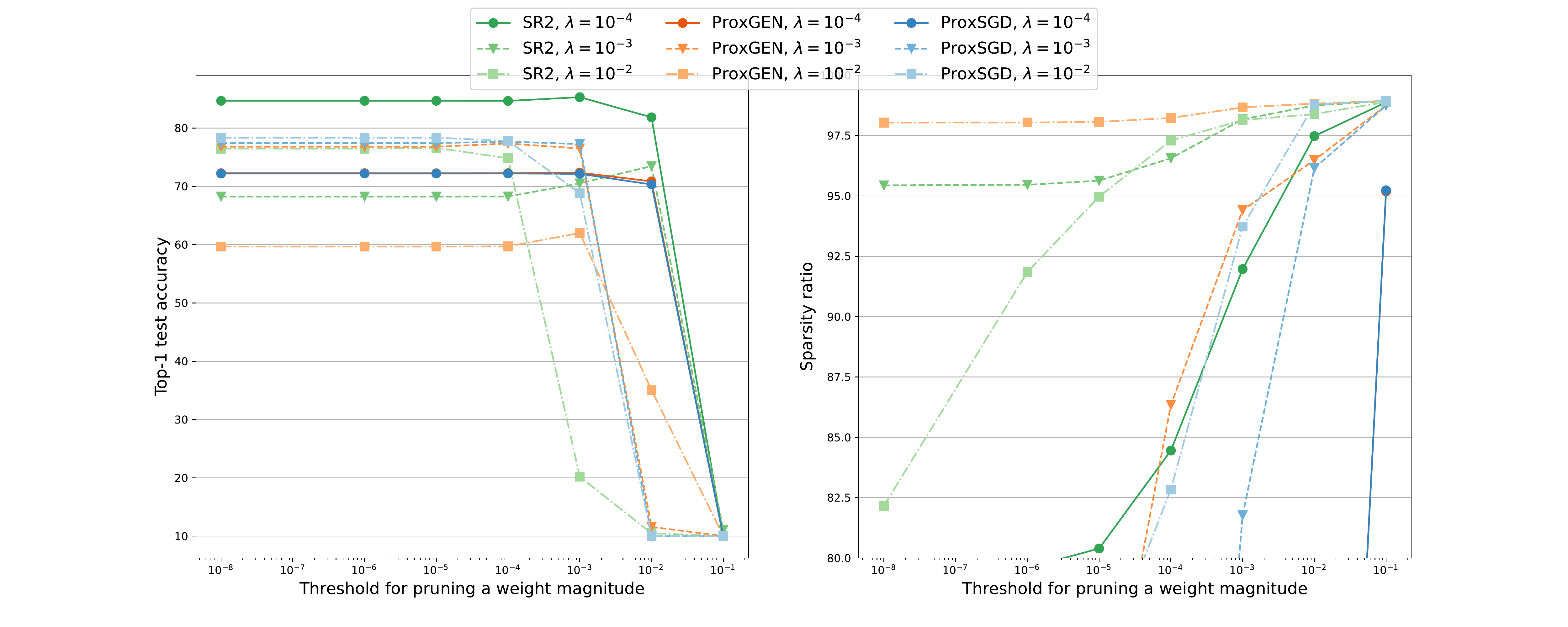}
    \\
    \includegraphics[scale=0.3]{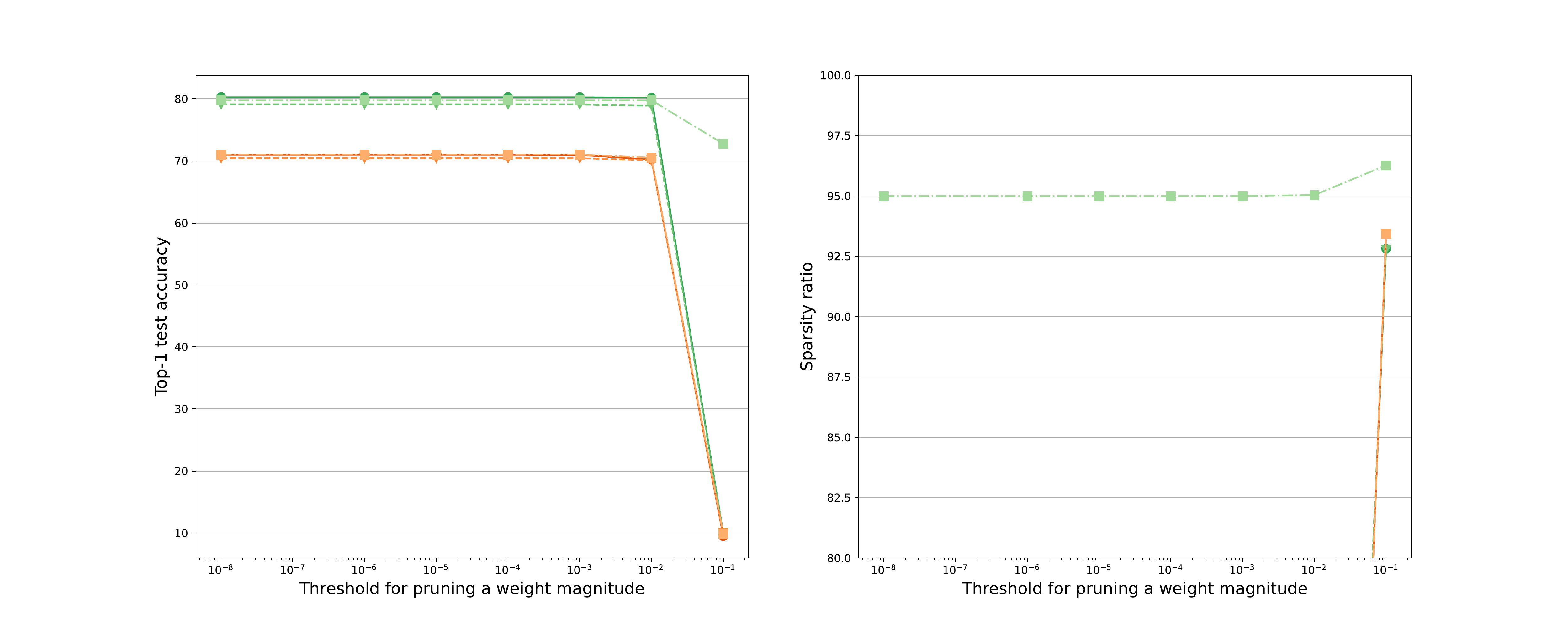}
    
      \caption{\label{fig:d121}%
      Accuracy and sparsity ratio of pruned DenseNet-121 on CIFAR-10 with $\mathcal{R} = \lambda\|\cdot\|_1$ (top) and $\mathcal{R} = \lambda\|\cdot\|_0$ (bottom).}
\end{figure}

\begin{figure}[h]
    \includegraphics[scale=0.3]{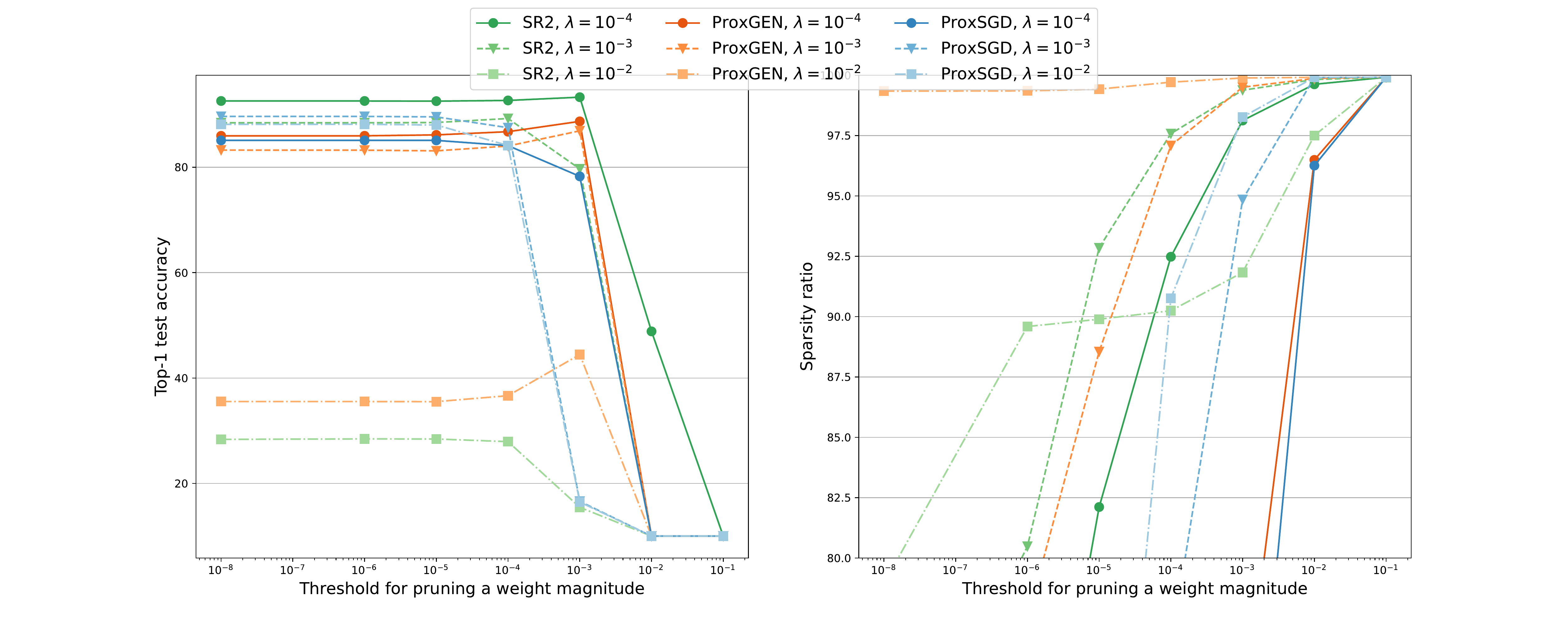}
    \includegraphics[scale=0.3]{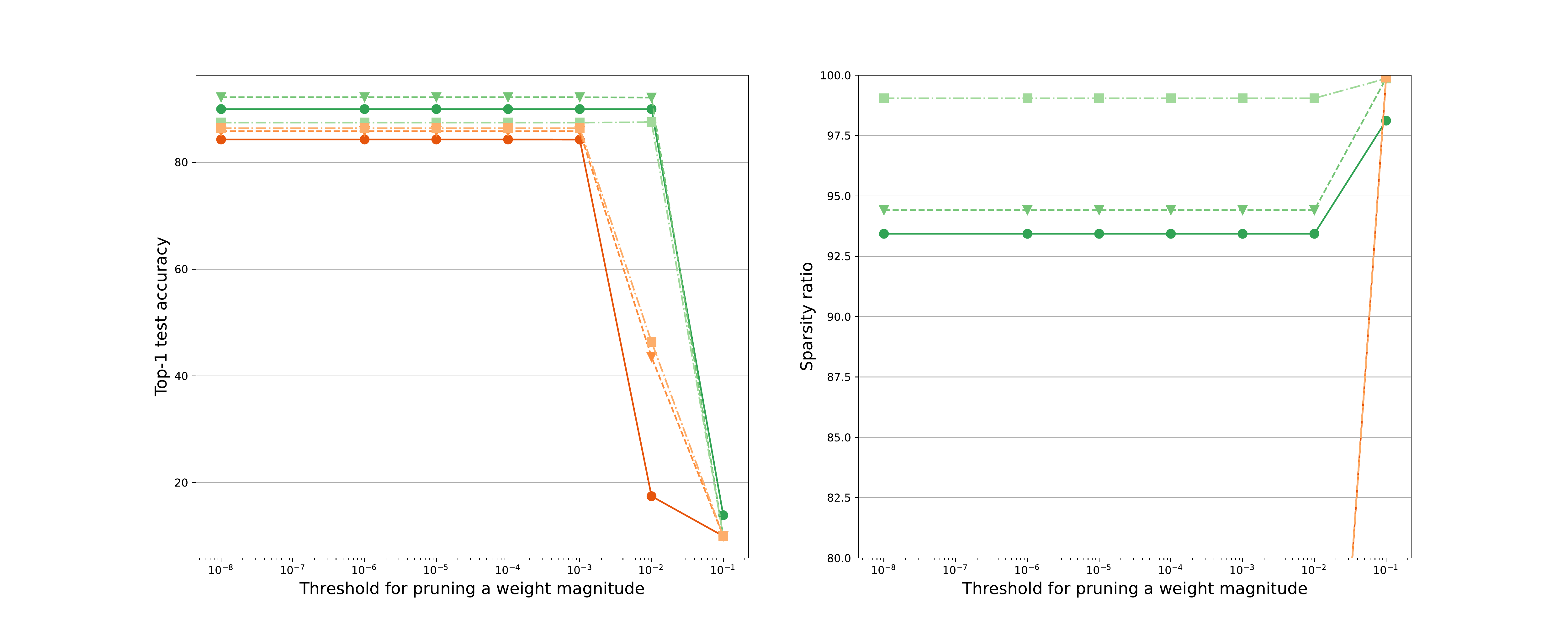}

      \caption{\label{fig:r34}%
      Accuracy and sparsity ratio of ResNet-34 trained on CIFAR-10 with $\mathcal{R} = \lambda\|\cdot\|_1$ (top) and $\mathcal{R} = \lambda\|\cdot\|_0$ (bottom).}
\end{figure}

\Cref{fig:r34} (top) shows that ResNet-34 retains full accuracy with $\alpha = 10^{-3}$ in most cases. The best combination is obtained with SR2, $\lambda = 10^{-4}$ and $\alpha = 10^{-3}$ that results in an accuracy of $93.32\%$ and a sparsity ratio of $98.12\%$, followed by ProxGEN with $\lambda = 10^{-3}$ and $\alpha = 10^{-3}$ with an accuracy of $86.93\%$ and a sparsity of $99.50\%$.\\

\Cref{tab:res_c10_l0} and \Cref{fig:d121} (bottom) report results on the same networks with $\mathcal{R} = \|\cdot\|_0$ and compares SR2 against ProxGEN only, since ProxSGD does not handle nonconvex regularizations. The results show a clear advantage of SR2 both in terms of final accuracy and sparsity ratios. Compared to $\mathcal{R} = \|\cdot\|_1$, using $\mathcal{R} = \|\cdot\|_0$ allows SR2 to reach higher accuracies overall at the expense of higher weight magnitudes. The results seem to suggest that the value of $\lambda$ needs a special adjustment for each regularizer. \Cref{fig:r34} (bottom) summarizes the retained accuracy after pruning and the equivalent sparsities for ResNet-34. It is clear that SR2 generates the better solutions with the highest sparsity levels while retaining most of the full accuracies. A similar figure for DenseNet-121 is reported in the appendix.

\subsection{Results on CIFAR-100}

In this section, SR2 is compared against ProxSGD and ProxGEN on a more challenging dataset. We train DenseNet-201 on CIFAR-100 with $\mathcal{R} = \|\cdot\|_1$ and $\|\cdot\|_0$ and compare each solution's resulting accuracy and sparsity. Once again, our goal is to extract sparse substructures, and we do not focus our resources on tuning each method to reach high test accuracies.\\ 

\Cref{tab:res_c100} summarizes the relevant scores of each solution with  $\mathcal{R} = \lambda\|\cdot\|_1$, and \Cref{fig:d201} (top) illustrates the retained accuracy and equivalent sparsity ratio after each pruning. SR2 with $\lambda = 10^{-4}, \alpha=10^{-2}$ obtains the highest accuracy of $58.50\%$ after removing $97.57\%$ of the weights from the original network. Other solvers that obtain a higher sparsity after pruning do so at the expense of the final accuracy of the network.\\

Similarly \Cref{tab:res_c100_l0} and \Cref{fig:d201} (bottom) show that SR2 obtains the best accuracy when the network is trained with $\mathcal{R} = \lambda=10^{-3}\|\cdot\|_0$ allows to consistently reach higher sparsity ratios while maintaining at least the accuracy of the full network . The best solution is found with $\lambda = 10^{-3}$ and $\alpha=10^{-2}$ as shown in  \Cref{fig:d201} (bottom).

\begin{table}[!htb]
    \caption{Results of DenseNet-201 on CIFAR-100}
    \begin{subtable}[t]{.5\linewidth}
      \centering
        \caption{with $\mathcal{R} = \lambda\|\cdot\|_1$.}
        \label{tab:res_c100}
        \resizebox{\columnwidth}{!}{%
        \begin{tabular}{lccrrr} \toprule
        \textbf{$\lambda$} & \textbf{Optim.} & \textbf{Acc.} & $\% |w| = 0$ & $\% |w| \leq 10^{-3}$  \\
        \midrule
                    & ProxSGD & $42.38 \%$ & $0.00\%$ & $25.23\%$\\
         $10^{-4}$  & ProxGEN & $ 41.11\%$  & $2.95\%$  & $4.01\%$\\
                    & SR2  & $\mathbf{57.63 \%}$& $\mathbf{59.74}\%$ & $\mathbf{92.65}\%$\\
        \midrule
                    & ProxSGD & $42.38 \%$ & $0.00\%$ & $72.47\%$\\
       $10^{-3}$    & ProxGEN & $\mathbf{46.70\%}$  & $48.94\%$  & $96.99\%$\\
                    & SR2  & $33.04 \%$& $\mathbf{97.21\%}$ & $\mathbf{98.92}\%$\\
        \midrule     
                    & ProxSGD & $\mathbf{42.86 \%}$ & $0.00\%$ & $25.21\%$\\
         $10^{-2}$  & ProxGEN & $ 6.96\%$  & $98.83\%$  & $99.48\%$\\
                    & SR2  & $7.31 \%$& $98.30\%$ & $99.60\%$\\
        \bottomrule
    \end{tabular}
    }
    \end{subtable}%
    \hfill
    \begin{subtable}[t]{.5\linewidth}
      \centering
        \caption{with $\mathcal{R} = \lambda\|\cdot\|_0$.}
        \label{tab:res_c100_l0}

        \resizebox{\columnwidth}{!}{%
        \begin{tabular}{lccrrr} \toprule
            \textbf{$\lambda$} & \textbf{Optim.} & \textbf{Acc.} & $\% |w| = 0$ & $\% |w| \leq 10^{-3}$  \\
            \midrule
            $10^{-4}$  & ProxGEN & $ 40.77\%$  & $2.97\%$  & $4.02\%$\\
                    & SR2  & $\mathbf{48.91} \%$& $ \mathbf{22.69} \%$ & $ \mathbf{22.78}\%$\\
            \midrule
            $10^{-3}$    & ProxGEN & $40.44 \% $  & $5.11\%$  & $5.60\%$\\
                    & SR2  & $\mathbf{49.28 \%}$& $\mathbf{39.82\%}$ & $\mathbf{39.92}\%$\\
            \midrule     
            $10^{-2}$  & ProxGEN & $ 39.91\%$  & $11.84\%$  & $12.33\%$\\
                    & SR2  & $1.00 \%$& $98.91\%$ & $99.50\%$\\
            \bottomrule
        \end{tabular}
        }
    \end{subtable} 
\end{table}

Overall, the results on CIFAR-100 are more contrasted than on CIFAR-10 with examples of ProxGEN and SR2 converging in some settings towards solutions with low accuracy. This suggests the need for a better tuning of the methods.

\begin{figure}[ht]
    \centering
        \includegraphics[scale=0.3]{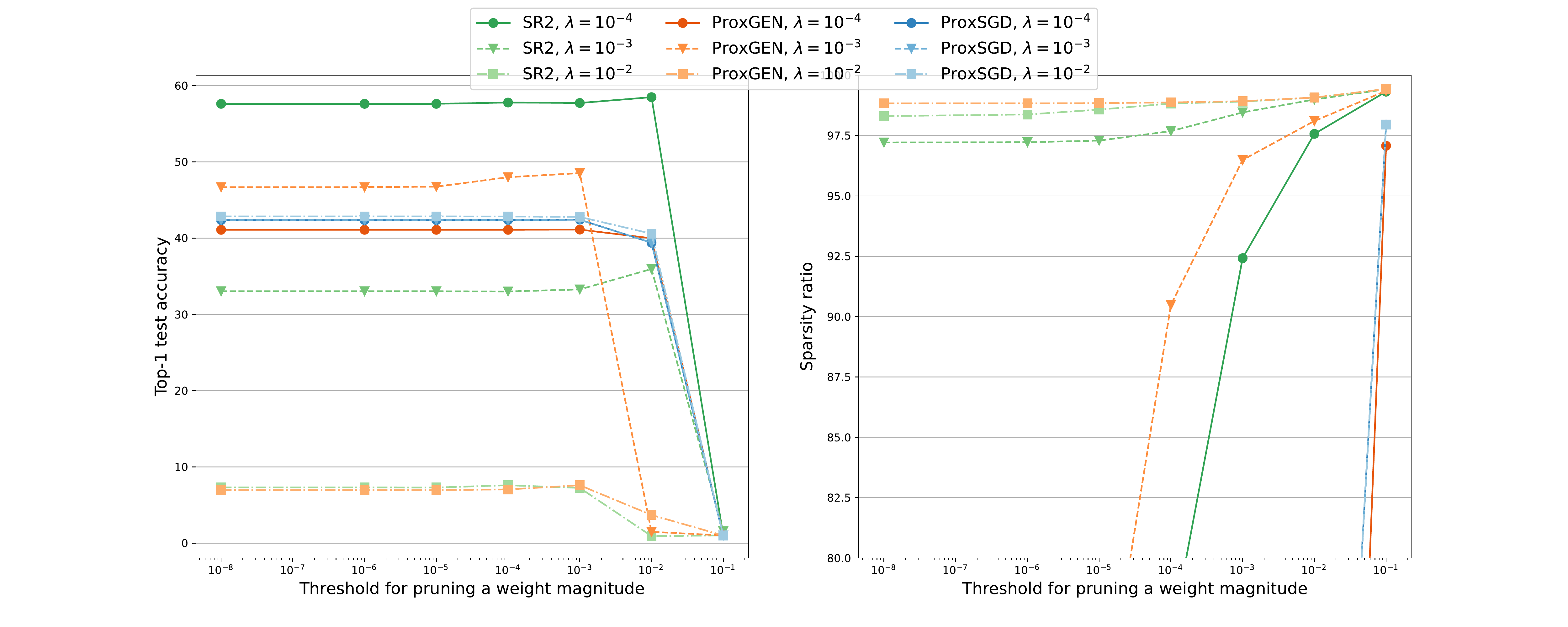}\\
        \includegraphics[scale=0.3]{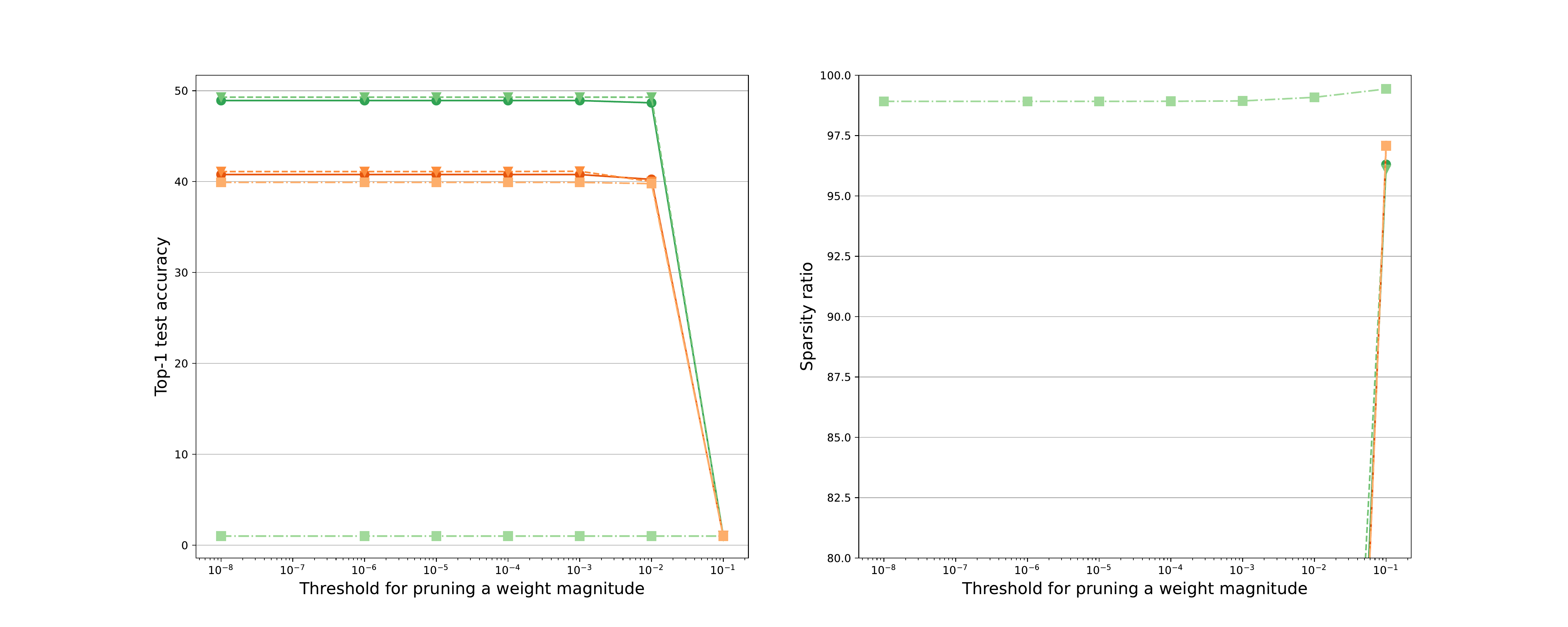}

      \caption{\label{fig:d201}%
      Accuracy and sparsity ratio of pruned DenseNet-201 on CIFAR-100 with $\mathcal{R} = \lambda\|\cdot\|_1$ (top) and $\mathcal{R} = \lambda\|\cdot\|_0$ (bottom).}
\end{figure}

\section{Conclusion}\label{sec:conclusion}

SR2 is a new stochastic proximal method for training DNNs with nonsmooth, potentially nonconvex regularizers. SR2 relies on an adaptive quadratic regularization framework that does not automatically accept every step during the training to ensure a decrease in the objective. We establish the convergence of a first-order stationarity measure to zero with a $\mathcal{O}(\epsilon^{-2})$ worst-case iteration complexity. 
Our numerical experiments show that SR2 consistently produces solutions that achieve high accuracy and sparsity levels after an unstructured pruning. Ongoing research is focusing on incorporating a momentum term, a preconditioner, and second-order information to accelerate the convergence and attain higher accuracy.


\acks{This work was supported by NSERC Alliance grant 544900- 19 in collaboration with Huawei-Canada, the Canada Excellence Research Chair in ``Data Science for Real-time Decision-making'', and Cornell Tech.
}

\bibliography{example_paper.bib}











\end{document}